\documentclass[10pt]{article}
\usepackage{slca_style}
\pdfmapline{+optimistic < assets/Optimistic.ttf <T1-WGL4.enc}
\pdfmapline{+optimistic_ts1 < assets/Optimistic.ttf <q-ts1-uni.enc}
\DeclareFontFamily{T1}{optimistic}{}
\DeclareFontFamily{TS1}{optimistic}{}
\DeclareFontShape{T1}{optimistic}{m}{n}{<-> s * [0.88] assets/optimistic}{}
\DeclareFontShape{T1}{optimistic}{b}{n}{<-> s * [0.88] assets/optimistic}{}
\DeclareFontShape{T1}{optimistic}{m}{it}{<-> s * [0.88] assets/optimistic}{}
\DeclareFontShape{T1}{optimistic}{b}{it}{<-> s * [0.88] assets/optimistic}{}
\DeclareFontShape{T1}{optimistic}{bx}{n}{<-> s * [0.88] assets/optimistic}{}
\DeclareFontShape{T1}{optimistic}{bx}{it}{<-> s * [0.88] assets/optimistic}{}
\DeclareFontShape{TS1}{optimistic}{m}{n}{<-> s * [0.88] assets/optimistic_ts1}{}
\DeclareFontShape{TS1}{optimistic}{b}{n}{<-> s * [0.88] assets/optimistic_ts1}{}
\DeclareFontShape{TS1}{optimistic}{m}{it}{<-> s * [0.88] assets/optimistic_ts1}{}
\DeclareFontShape{TS1}{optimistic}{b}{it}{<-> s * [0.88] assets/optimistic_ts1}{}
\DeclareFontShape{TS1}{optimistic}{bx}{n}{<-> s * [0.88] assets/optimistic_ts1}{}
\DeclareFontShape{TS1}{optimistic}{bx}{it}{<-> s * [0.88] assets/optimistic_ts1}{}

\DisableLigatures{family=optimistic}
\makeatletter
\newif\ifBA@rulepending
\newif\ifBA@rowfollows
\newdimen\BA@rulewd
\newdimen\BA@rulegap
\newdimen\BA@rowht
\newdimen\BA@ruledist
\newdimen\BA@savedepth
\let\BA@rulecolor\relax
\let\BA@CT@setup\CT@setup
\def\CT@setup{%
  \dimen@\ht\z@
  \ifdim\minrowclearance>\z@\advance\dimen@\minrowclearance\fi
  \ifdim\dimen@>\BA@rowht\global\BA@rowht\dimen@\fi
  \BA@CT@setup}
\def\BA@drawrule{%
  \BA@savedepth\prevdepth
  \kern-\BA@ruledist
  \kern-\BA@rulewd
  {\BA@rulecolor\hrule\@height\BA@rulewd}%
  \kern\BA@ruledist
  \prevdepth\BA@savedepth
  \global\BA@rulependingfalse}
\def\BA@flushrule{%
  \ifBA@rulepending
    \ifdim\BA@rowht>\z@
      \PackageWarning{main}{A deferred table rule was flushed without a row
        end;\MessageBreak its position may be wrong}%
    \fi
    \BA@ruledist\BA@rulegap
    \BA@drawrule
  \fi}
\def\BA@rowend{%
  \ifBA@rulepending
    \BA@ruledist\BA@rowht
    \ifdim\ht\@arstrutbox>\BA@ruledist\BA@ruledist\ht\@arstrutbox\fi
    \advance\BA@ruledist\prevdepth
    \advance\BA@ruledist\BA@rulegap
    \BA@drawrule
  \fi
  \global\BA@rowht\z@}
\def\BA@checknext{%
  \BA@rowfollowstrue
  \ifx\@tempa\end\BA@rowfollowsfalse\fi
  \ifx\@tempa\toprule\BA@rowfollowsfalse\fi
  \ifx\@tempa\midrule\BA@rowfollowsfalse\fi
  \ifx\@tempa\bottomrule\BA@rowfollowsfalse\fi
  \ifx\@tempa\cmidrule\BA@rowfollowsfalse\fi
  \ifx\@tempa\specialrule\BA@rowfollowsfalse\fi
  \ifx\@tempa\addlinespace\BA@rowfollowsfalse\fi
  \ifx\@tempa\morecmidrules\BA@rowfollowsfalse\fi
  \ifx\@tempa\hline\BA@rowfollowsfalse\fi
  \ifx\@tempa\cline\BA@rowfollowsfalse\fi
  \ifx\@tempa\noalign\BA@rowfollowsfalse\fi
  \ifx\@tempa\egroup\BA@rowfollowsfalse\fi}
\def\@BTnormal{\futurenonspacelet\@tempa\BA@BTnormal}
\def\BA@BTnormal{%
  \BA@checknext
  \ifBA@rowfollows
    \kern\@thisrulewidth
    \prevdepth-\@m\p@
    \global\BA@rulewd\@thisrulewidth
    \global\BA@rulegap\z@
    \global\let\BA@rulecolor\CT@arc@
    \global\BA@rulependingtrue
  \else
    {\CT@arc@\hrule\@height\@thisrulewidth}%
  \fi
  \@BTendrule}
\let\BA@BTrule\@BTrule
\def\@BTrule[#1]{\BA@flushrule\BA@BTrule[#1]}
\patchcmd\@BTendrule{\vskip\@belowrulesep}
  {\ifBA@rulepending\global\advance\BA@rulegap\@belowrulesep\fi
   \vskip\@belowrulesep}
  {}{\PackageError{main}{Could not patch booktabs' \string\@BTendrule}{}}
\let\BA@cmidrule\cmidrule
\def\cmidrule{\noalign{\BA@flushrule}\BA@cmidrule}
\def\BA@splitendarray#1#2#3\BA@stop{%
  \expandafter\ifx\csname tbl_crcr:n\endcsname#1%
    \def\endarray{#1{#2}\noalign{\BA@flushrule}#3}%
  \else
    \PackageError{main}{Unexpected definition of \string\endarray}{}%
  \fi}
\expandafter\BA@splitendarray\endarray\BA@stop
\def\BA@tablestart{%
  \edef\BA@outerdims{%
    \global\BA@rowht\the\BA@rowht\relax
    \global\BA@rulewd\the\BA@rulewd\relax
    \global\BA@rulegap\the\BA@rulegap\relax}%
  \let\BA@outercolor\BA@rulecolor
  \ifBA@rulepending\let\BA@outerpending\BA@rulependingtrue
  \else\let\BA@outerpending\BA@rulependingfalse\fi
  \global\BA@rulependingfalse
  \global\BA@rowht\z@
  \CT@everycr\expandafter{\the\CT@everycr\noalign{\BA@rowend}}}
\def\BA@tableend{%
  \BA@outerdims
  \global\let\BA@rulecolor\BA@outercolor
  \global\BA@outerpending}
\AtBeginDocument{%
  \expandafter\def\expandafter\@tabarray\expandafter{%
    \expandafter\BA@tablestart\@tabarray}%
  \expandafter\def\expandafter\endarray\expandafter{\endarray\BA@tableend}}
\makeatother
\makeatletter
\newif\ifBA@cmidpending
\newif\ifBA@cmidskip
\newif\ifBA@cmidemit
\newcount\BA@cmidserial
\newdimen\BA@lastrowht
\newdimen\BA@lastrowdepth
\newdimen\BA@cmidemitrowht
\newdimen\BA@cmidemitrowdepth
\def\BA@cmidqueue{}
\let\BA@orig@cmidrulea\@cmidrulea
\let\BA@orig@cmidruleb\@cmidruleb
\let\BA@orig@cmidrule\cmidrule
\def\cmidrule{\noalign{\global\BA@cmidskiptrue\BA@flushrule}\BA@orig@cmidrule}
\def\BA@savecmid#1{%
  \global\advance\BA@cmidserial\@ne
  \edef\BA@cmidid{\the\BA@cmidserial}%
  \expandafter\xdef\csname BA@cmidtype@\BA@cmidid\endcsname{#1}%
  \expandafter\xdef\csname BA@cmidla@\BA@cmidid\endcsname{\the\@cmidla}%
  \expandafter\xdef\csname BA@cmidlb@\BA@cmidid\endcsname{\the\@cmidlb}%
  \expandafter\xdef\csname BA@cmidwidth@\BA@cmidid\endcsname{\the\@thisrulewidth}%
  \expandafter\xdef\csname BA@cmidkernl@\BA@cmidid\endcsname{\the\cmrkern@l}%
  \expandafter\xdef\csname BA@cmidkernr@\BA@cmidid\endcsname{\the\cmrkern@r}%
  \expandafter\global\expandafter\let\csname BA@cmidcolor@\BA@cmidid\endcsname\CT@arc@
  \edef\BA@cmidtmp{\noexpand\BA@cmidone{\BA@cmidid}}%
  \expandafter\g@addto@macro\expandafter\BA@cmidqueue\expandafter{\BA@cmidtmp}%
  \global\BA@cmidpendingtrue
  \kern\@thisrulewidth
}
\def\BA@capturecmid#1{\omit\cr\noalign{\BA@savecmid{#1}}}
\def\@cmidrulea{\BA@capturecmid{a}}
\def\@cmidruleb{\BA@capturecmid{b}}
\def\BA@cmidone#1{%
  \noalign{%
    \global\@cmidla=\csname BA@cmidla@#1\endcsname\relax
    \global\@cmidlb=\csname BA@cmidlb@#1\endcsname\relax
    \global\@thisrulewidth=\csname BA@cmidwidth@#1\endcsname\relax
    \global\cmrkern@l=\csname BA@cmidkernl@#1\endcsname\relax
    \global\cmrkern@r=\csname BA@cmidkernr@#1\endcsname\relax
    \expandafter\global\expandafter\let\expandafter\CT@arc@\csname BA@cmidcolor@#1\endcsname
    \if a\csname BA@cmidtype@#1\endcsname
      \global\let\BA@cmidnext\BA@orig@cmidrulea
    \else
      \global\let\BA@cmidnext\BA@orig@cmidruleb
    \fi
    \BA@ruledist\BA@cmidemitrowht
    \advance\BA@ruledist\BA@cmidemitrowdepth
    \advance\BA@ruledist\@thisrulewidth
    \kern-\BA@ruledist}%
  \BA@cmidnext
  \noalign{\kern\BA@cmidemitrowht\kern\BA@cmidemitrowdepth}%
}
\def\BA@cmidemitpending#1#2#3{%
  #1#2#3%
  \noalign{%
    \global\BA@cmidemittrue
    \global\BA@cmidemitrowht\BA@lastrowht
    \global\BA@cmidemitrowdepth\BA@lastrowdepth}%
  \cr
  \noalign{\kern-\BA@cmidemitrowht\kern-\BA@cmidemitrowdepth}\BA@cmidqueue\noalign{\global\BA@cmidpendingfalse\global\let\BA@cmidqueue\@empty\global\BA@cmidemitfalse}%
}
\def\BA@cmidemitter{%
  \ifBA@cmidemit
  \else\ifBA@cmidskip
    \noalign{\global\BA@cmidskipfalse}%
  \else\ifBA@cmidpending
    \BA@cmidemitpending
  \fi\fi\fi}
\let\BA@oldrowend\BA@rowend
\def\BA@rowend{%
  \global\BA@lastrowht\BA@rowht
  \ifdim\ht\@arstrutbox>\BA@lastrowht\global\BA@lastrowht\ht\@arstrutbox\fi
  \global\BA@lastrowdepth\prevdepth
  \BA@oldrowend}
\def\BA@tablestart{%
  \edef\BA@outerdims{%
    \global\BA@rowht\the\BA@rowht\relax
    \global\BA@rulewd\the\BA@rulewd\relax
    \global\BA@rulegap\the\BA@rulegap\relax}%
  \let\BA@outercolor\BA@rulecolor
  \ifBA@rulepending\let\BA@outerpending\BA@rulependingtrue
  \else\let\BA@outerpending\BA@rulependingfalse\fi
  \global\BA@rulependingfalse
  \global\BA@rowht\z@
  \CT@everycr\expandafter{\the\CT@everycr\noalign{\BA@rowend}\BA@cmidemitter}}
\makeatother

\theoremstyle{plain}
\newtheorem{theorem}{Theorem}[section]
\newtheorem{proposition}[theorem]{Proposition}

\newtheorem{corollary}[theorem]{Corollary}
\theoremstyle{definition}

\theoremstyle{remark}

\newcommand{\bx}{\mathbf{x}}
\newcommand{\mr}[1]{m(\mathbf{x}_{#1})}
\newcommand{\reward}[1]{r(\mathbf{x}_{#1})}
\newcommand{\seqlen}{L}
\newcommand{\Tsteps}{T}
\newcommand{\Nsamples}{N}
\newcommand{\Kcands}{K}
\newcommand{\BranchEvery}{b}
\newcommand{\NSnapshots}{1{,}012{,}992}
\newcommand{\NTestTrajectories}{42{,}208}
\crefname{assumption}{Assumption}{Assumptions}
\Crefname{assumption}{Assumption}{Assumptions}

\begin{document}
\sloppy
\makearxivtitle
\begingroup
\renewcommand{\thefootnote}{\ensuremath{\dagger}}
\footnotetext{Correspondence to: Zhijun Gao (\texttt{gaozhijun@pku.edu.cn}).}
\endgroup
\section{Introduction}
\label{sec:intro}

Discrete diffusion language models (dLLMs) are emerging as an alternative reasoning
paradigm to autoregressive (AR) decoding. Rather than committing to one
token prefix at a time, a dLLM maintains a partially masked full sequence and gradually
denoises it into a solution. For mathematical reasoning, these intermediate states, which
we call snapshots, can already contain tentative equations, quantities, answer fragments,
and later reasoning steps before the whole derivation is finalized. This makes dLLM
generation look unusually well matched to process reward models (PRMs): a reward model could inspect the evolving
solution state, identify promising trajectories, and steer computation before the final
answer is fixed. If this works, PRM guidance would offer a natural way to convert extra
test-time compute into better reasoning accuracy.

\begin{figure}[!t]
    \centering
    \begin{tikzpicture}
    \node[anchor=south west,inner sep=0] (fw) at (0,0) {\includegraphics[width=\textwidth]{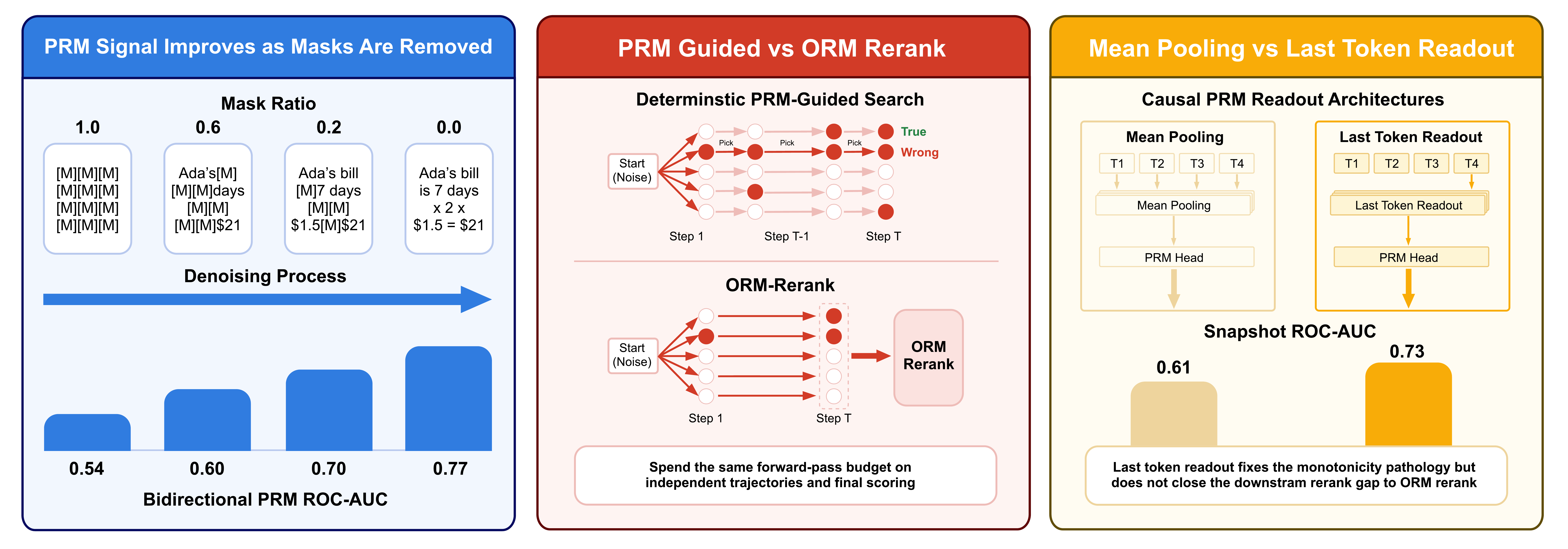}};
    \begin{scope}[x={(fw.south east)},y={(fw.north west)}]
    \fill[fwbg] (0.11987,0.12491) rectangle (0.14437,0.17433);
    \fill[fwbg] (0.19745,0.12491) rectangle (0.22196,0.17433);
    \end{scope}
    \end{tikzpicture}
    \caption{\textbf{Framework overview.} \textbf{(A) Signal decay.} Bidirectional PRM ROC-AUC rises from $0.54$ in the most-masked bucket to $0.77$ in the least-masked bucket $[0.0, 0.1)$ and $0.78$ on fully decoded states, so guidance decisions made at high mask ratios rely on the weakest signal. \textbf{(B) Diversity collapse.} Deterministic top-$1$ PRM pruning collapses the candidate pool, while ORM Rerank spends the same forward-pass budget on independent trajectories. \textbf{(C) Readout mismatch.} For causal PRM variants, switching from mean pooling to last-token pooling raises ROC-AUC on final states from $0.61$ to $0.73$ but does not close the downstream gap to ORM Rerank.}
    \label{fig:framework}
\end{figure}

Process reward models have been highly effective for AR mathematical reasoning, where
they score linear prefixes or stepwise traces and improve solution selection
\citep{lightman2023letsverify,wang2024mathshepherd}. But this success does not transfer
automatically to dLLMs. An AR PRM assumes that evidence arrives in prefix order; a dLLM
snapshot is instead a scattered subset of revealed positions, where later tokens may be
visible while earlier tokens remain hidden. This creates an architectural mismatch for
causal scorers built around sequential factorization and positional conventions
\citep{su2024roformer}, and motivates bidirectional scorers that attend to every visible
token. Recent dLLMs such as Dream, LLaDA, and Block Diffusion mainly improve the generator
or its sampling procedure \citep{ye2025dream,nie2025llada,arriola2025block}, rather than
asking whether reward guidance beats simple best-of-$N$, self-consistency
\citep{wang2023selfconsistency}, or outcome reward model (ORM) reranking baselines under
matched compute.
The natural recipe, PRM Guided, adapts PRM beam search from AR reasoning
\citep{snell2024scaling} to the denoising loop: it branches to $K$ candidates at fixed
intervals and keeps only the one the PRM scores highest (top-$1$). Compared with ORM
Rerank, which samples $N$ complete solutions independently and keeps the one the ORM
scores highest, it adds an intermediate state scorer, scorer calls inside the denoising
loop, and early pruning; if that machinery does not pay at matched compute, it is hard
to justify. Five questions remain open:
(i) does PRM Guided beat ORM Rerank at the same forward-pass budget?
(ii) does PRM signal remain useful at high mask ratios, when most tokens are still hidden?
(iii) is deterministic pruning safe for the diversity of the candidate pool?
(iv) can a scorer trained on intermediate states make the final choice? and
(v) is mean pooling the right readout for a causal PRM?

We answer them with a diagnostic protocol (Figure~\ref{fig:framework}) that charges
denoising, PRM scoring, and ORM scoring in the same forward-pass unit and separates the
damage guidance does to the candidate pool from the quality of the final selection.
\textbf{On Dream-7B, the answer to (i) is no on GSM8K, MATH, and MBPP}: on GSM8K, ORM
Rerank with 8 samples beats PRM Guided at every budget up to four times its compute. The gap has
two separable sources: guidance prunes on a weak signal, and on GSM8K and MATH the PRM is
a poor final judge. MBPP, where the PRM judges finished programs as well as the ORM,
isolates the cost of guidance itself.

\textbf{Contributions.}
\begin{itemize}[leftmargin=*,topsep=2pt,itemsep=2pt]
\item A matched forward-pass protocol for comparing guided and unguided dLLM reasoning at fixed inference budgets (\Cref{sec:setup}).
\item A matched-compute ranking that holds across math and code: with task-matched ORMs on Dream-7B, deterministic top-$1$ PRM guidance trails ORM Rerank by $9.95$ and $12.69$\,pp on GSM8K at $K{=}8$ and $K{=}32$, by $9.85$\,pp on MATH~\citep{hendrycks2021math}, and by $12.16$\,pp on MBPP~\citep{austin2021program} (\Cref{sec:empirical}).
\item A decomposition of the gap into pool damage and terminal selection: ROC-AUC decays from $0.77$ to $0.54$ with mask ratio, top-$1$ pruning cuts the Oracle ceiling by $13.75$\,pp, a matched SMC sampler that restores most of that ceiling still selects at the top-$1$ level, and a final-state PRM matches the ORM on the same candidates (\Cref{sec:empirical,sec:mechanism-decay,sec:mechanism-diversity,sec:mechanism-terminal}).
\item Bidirectional PRMs beat causal PRMs on Dream-7B and LLaDA-8B-Base, and a readout fix recovers most of the causal gap: last-token pooling raises final-state ROC-AUC from $0.61$ to $0.73$ and restores best-of-$N$ reranking gains (\Cref{sec:mechanism-causal}, App.~\ref{app:llada_prelim}).
\end{itemize}


\section{Related Work}
\label{sec:related}

\textbf{AR reward models and verifier reranking.}
Outcome verifiers and PRMs are established tools for AR mathematical reasoning: a model generates chain-of-thought solutions~\citep{wei2022cot}, a verifier scores final answers or stepwise traces, and the system selects or guides toward completions that score higher~\citep{cobbe2021gsm8k,lightman2023letsverify,uesato2022process,wang2024mathshepherd,luo2024omegaprm,shao2024deepseekmath,guo2025deepseekr1}. Recent verifier training reformulations~\citep{wang2025grpoverif} and turn-level reward designs for multi-turn reasoning~\citep{wei2025mtgrpo} continue this AR-centric line of work. These methods assume a linear prefix order: the scorer sees a context that grows from left to right and evaluates either the current prefix or the final completion. dLLM snapshots violate this assumption because their visible tokens form a scattered subset of positions rather than a prefix.

\textbf{dLLM reasoning and decoding.}
Diffusion language models and masked-token generators provide a different test-time substrate from AR decoding~\citep{hoogeboom2021argmax,austin2021d3pm,li2022diffusionlm,lou2024sedd,sahoo2024mdlm,gulrajani2023likelihood,chang2022maskgit}. Recent dLLMs such as Dream and LLaDA, and work on block diffusion, improved samplers, and parallel decoders, mainly target generation quality or decoding speed~\citep{ye2025dream,nie2025llada,arriola2025block,zheng2025timeagnostic,zhou2025dos,kim2025dapd,ringel2025demask}. Guidance in continuous diffusion relies on aligned continuous scores or gradients~\citep{dhariwal2021diffusion,ho2022classifierfree}, whereas masked-token reasoning requires a scorer that remains informative on partially observed discrete states; denoising pretraining with bidirectional encoders, as in BERT and BART~\citep{devlin2019bert,lewis2020bart}, motivates the bidirectional attention choice for PRMs that score snapshots.

\textbf{Test-time scaling and reward guidance for dLLMs.}
A growing line of work spends extra inference compute on dLLMs. Particle samplers resample denoising trajectories with sequential Monte Carlo (SMC)~\citep{ou2025discrete} or refine whole trajectories with particle Gibbs~\citep{dang2026trajectory}, and remasking samplers let the model revise tokens after they are unmasked~\citep{wang2025remdm}. Reward-free guidance avoids explicit PRMs, which it argues are hard to train on partially masked states, and derives an implicit process reward from a post-trained dLLM~\citep{chen2025rfg}. Correct answers often appear in the middle of denoising and are later overwritten~\citep{wang2026temporal}, and RL post-training with outcome rewards improves dLLM reasoning~\citep{zhao2025d1}. These works propose new samplers, guidance rules, or training objectives. We train explicit intermediate-state PRMs and measure where their guidance spends compute; our SMC control belongs to the particle family and shows that restoring the candidate pool does not repair the final choice when an intermediate-state scorer makes it.

\textbf{Matched compute evaluation gap.}
Best-of-$\Nsamples$, self-consistency, and ORM reranking are the natural baselines for spending extra test-time compute~\citep{wang2023selfconsistency,nakano2021webgpt,gao2023scaling,li2023makinglargelanguagemodels,snell2024scaling,brown2024monkeys}. Tree search methods such as Tree of Thoughts~\citep{yao2023tot} and tool-augmented reasoning agents~\citep{deepagent2025} also spend extra test-time compute, but they build on AR generation in prefix order rather than on scattered dLLM snapshots. Our diagnostic question is whether intermediate-state PRM guidance beats these baselines when denoising and scorer calls are charged under the same forward-pass budget.


\section{Matched Compute Diagnostic Protocol}
\label{sec:setup}

We compare reward-guided dLLM reasoning methods under a shared forward-pass budget. The accounting charges Vanilla sampling, Majority voting, ORM Rerank, PRM Guided, PRM Hybrid, and Oracle ceilings in the same unit: one forward pass through a dLLM-scale model for one candidate state. This removes hidden scorer cost as a confounder before we compare accuracy, candidate diversity, and scorer behavior.

\subsection{Notation and PRM architecture}

\textbf{Notation.}
Let $\seqlen$ denote sequence length, $\Tsteps{=}128$ the number of denoising steps, and $\bx_t$ a partially masked state at step $t$ with mask ratio $\mr{t}\in[0,1]$; $\mr{\Tsteps}{=}1$ is fully masked and $\mr{0}{=}0$ is fully decoded. We write $\Nsamples$ for the number of independent complete samples used by best-of-$N$ methods, $\Kcands$ for the branch width in PRM Guided, $\BranchEvery\in\{16,32,48,64\}$ for the denoising interval between PRM calls, and $\reward{t}$ for a scorer applied to state $\bx_t$.

\textbf{PRM definition and architecture.}
Here, PRM denotes an \emph{outcome-supervised intermediate-state value model}: each partial state inherits the final-correctness target of its completed trajectory. We study Dream-v0-Instruct-7B~\citep{ye2025dream} as the primary dLLM and LLaDA-8B-Base~\citep{nie2025llada} as the cross-backbone check. The PRM is a frozen dLLM backbone with trainable LoRA adapters~\citep{hu2022lora} plus a two-layer MLP reward head over pooled solution hidden states and a $256$-dimensional sinusoidal step-index embedding (App.~\ref{app:prm_training}). We compare bidirectional PRMs with full self-attention against causal PRMs with an L$\to$R attention mask, and isolate readout effects by retraining the causal scorer with last-token pooling in Section~\ref{sec:mechanism-causal}. Because the PRM is trained on states at every mask ratio, we call it the cross-mask PRM. Two scorers see only fully decoded states: the ORM, a bidirectional scorer trained on final states, and the final-state PRM, the bidirectional PRM retrained on final states (App.~\ref{app:additional_matched_controls}).

\subsection{Forward-pass compute accounting}
\label{sec:accounting}

A Vanilla sample requires $\Tsteps{=}128$ denoising passes. Majority$@N$ and Oracle$@N$ generate $N$ independent trajectories, so their denoising cost is $128N$. ORM Rerank$@N$ adds one ORM scoring pass per complete candidate, giving $C_{\mathrm{ORM}}(N){=}128N+N$.

\begin{figure}[t]
\centering
\begin{minipage}{0.97\linewidth}
\hrule
\vspace{2pt}
\textbf{Algorithm 1: PRM Guided segmental top-$1$ pruning on a dLLM}
\vspace{2pt}
\hrule
\vspace{4pt}
\textbf{Input:} dLLM $p_\theta$, PRM scorer $r_\phi$, prompt $q$, branching width $K$, interval $\BranchEvery$, denoising steps $T$.\\
\textbf{Output:} one generated sequence.
\vspace{2pt}
\hrule
\vspace{3pt}
\begin{enumerate}[leftmargin=*,itemsep=1pt,topsep=2pt]
\item Initialize $\bx_T \gets \mathrm{MASK}^L$ (fully masked with prompt).
\item Set $\mathrm{segments} \gets \lceil T / \BranchEvery \rceil$; $t_s \gets T$. The last segment is shorter when $\BranchEvery$ does not divide $T$.
\item \textbf{for} $s = 1, \dots, \mathrm{segments}$ \textbf{do}
\begin{enumerate}[label=\arabic*.,leftmargin=*,itemsep=1pt,topsep=1pt]
\item Replicate $\bx_{t_s}$ into $K$ copies $\{\bx^{(k)}_{t_s}\}_{k=1}^K$.
\item Run $\BranchEvery$ denoising steps on all $K$ copies \emph{in parallel} with independent per-copy token sampling, yielding $\{\bx^{(k)}_{t_s - \BranchEvery}\}_{k=1}^K$.
\item Score all $K$: $v_k \gets r_\phi(\bx^{(k)}_{t_s - \BranchEvery})$.
\item Prune to top-$1$: $\bx_{t_s - \BranchEvery} \gets \bx^{(\arg\max_k v_k)}_{t_s - \BranchEvery}$; update $t_s \gets t_s - \BranchEvery$.
\end{enumerate}
\item \textbf{return} $\bx_0$ (the retained trajectory after the final scored prune).
\end{enumerate}
\vspace{2pt}
\hrule
\end{minipage}
\end{figure}

\textbf{PRM Guided is charged for both denoising and every PRM scoring call, including the final segment.}
The segmental top-$1$ algorithm branches to $K$ candidates every $\BranchEvery$ steps, denoises all $K$ candidates for that segment, scores all $K$ candidates, and retains the highest scoring state; Algorithm~1 gives the exact procedure. With $\lceil T/\BranchEvery \rceil$ segments, the per-sample forward-pass cost is
\[
C_{\mathrm{PRM}}(K,\BranchEvery) \;=\; KT \;+\; K\,\bigl\lceil T/\BranchEvery \bigr\rceil .
\]
In the headline setting, $\BranchEvery{=}64$ and $T{=}128$ give $\lceil T/\BranchEvery \rceil{=}2$ segments and $130K$ passes: $1{,}040$ for $K{=}8$ and $4{,}160$ for $K{=}32$. The matched ORM Rerank budgets are $1{,}032$ and $4{,}128$ passes, so the comparison is matched within $\sim\!0.8\%$ and the small asymmetry favors PRM Guided; App.~\ref{app:compute} reports wall-clock validation.

\subsection{Diagnostic measurements}

Each question in \Cref{sec:intro} has a direct measurement. Accuracy at matched compute answers (i) (\Cref{sec:empirical}); PRM ROC-AUC across mask ratios answers (ii) (\Cref{sec:mechanism-decay}); answer diversity and Oracle@$K$ of the guided pool answer (iii) (\Cref{sec:mechanism-diversity}); reranking one shared candidate pool with different scorers, and PRM selection after an SMC sampler restores the pool, answer (iv) (\Cref{sec:empirical,sec:mechanism-terminal}); and readout ablations on causal PRMs answer (v) (\Cref{sec:mechanism-causal}).

\subsection{Training data and evaluation}

\textbf{PRM training uses on-policy dLLM states from the training split, not test prompts.}
The PRM is trained on on-policy intermediate states from Dream-7B denoising trajectories on GSM8K training problems, with binary final-correctness labels. Training and test trajectories share the sampler and the snapshot schedule, so their mask-ratio distributions match by construction. All training, tuning, and early stopping use the GSM8K \emph{train} split, with validation on held-out training problems.

\textbf{The primary evaluation is GSM8K test with strict answer extraction.}
GSM8K~\citep{cobbe2021gsm8k} has $1{,}319$ test problems; the best-of-$32$ test pool contains \NTestTrajectories{} complete trajectories. Vanilla, Majority, ORM Rerank, PRM Guided, and PRM Hybrid use the shared Dream sampling configuration: \texttt{temperature=0.5}, \texttt{alg\_temp=0.5}, \texttt{top\_p=1.0}, and $T{=}128$. Answers are scored with a strict regex extractor throughout, which avoids the inflated accuracy of lm-eval style last-number matching~\citep{gao2024lmeval} (App.~\ref{app:extraction}). MATH500 also serves as an out-of-distribution test for the GSM8K-trained scorers (App.~\ref{app:math500_full}); the task-specific MATH and MBPP controls train their own verifiers (App.~\ref{app:additional_matched_controls}).

\textbf{The baselines separate sampling, scoring, guidance, and oracle headroom.}
Vanilla runs one dLLM trajectory. Majority$@N$ picks the most frequent extracted answer among $N$ independent trajectories. ORM Rerank$@N$ scores the same independent final state candidates with an ORM trained on GSM8K and selects the highest scoring candidate. PRM Guided follows the segmental procedure above, and PRM Hybrid keeps all $K$ candidates at its final segment to measure diversity and post hoc selector ceilings. Oracle$@N$/Oracle@$K$ marks a problem correct if any pool candidate is correct, used only as a verifier headroom ceiling.
The ORM comparison targets the setting where a task-matched final-answer verifier can be fit from the train split.


\section{Task-Matched ORM Reranking Beats Deterministic PRM Guidance at Matched Compute}
\label{sec:empirical}

We first test whether deterministic PRM Guided beats independent sampling plus ORM Rerank at the same forward-pass budget. On Dream-7B, ORM Rerank wins on GSM8K at both budgets and on task-specific MATH and MBPP. Section~\ref{sec:mechanism} decomposes the gap into pool damage from guidance and terminal selection quality.

\begin{figure}[t]
    \centering
    \includegraphics[width=\textwidth]{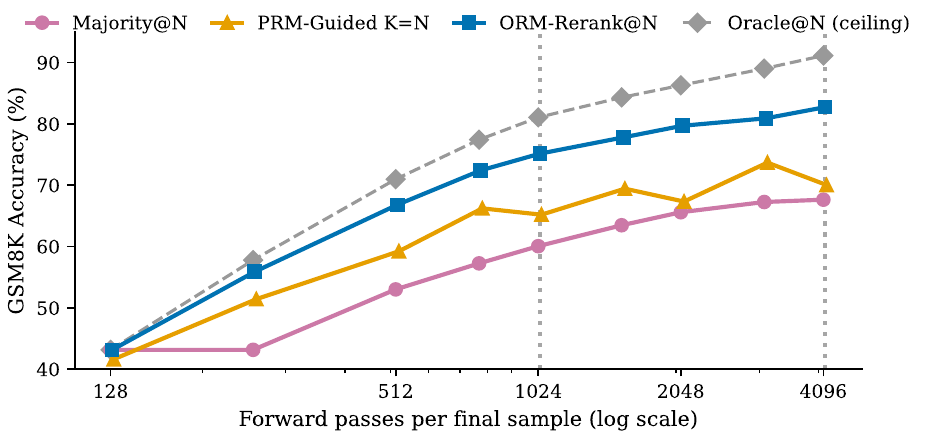}
    \caption{\textbf{At matched compute, reranking with a task-matched ORM outperforms PRM Guided on GSM8K with Dream-7B.} ORM Rerank lies above deterministic PRM Guided across the tested compute range, and ORM Rerank@$8$ exceeds every PRM Guided budget up to $K{=}32$. Oracle@$N$ marks the perfect-selector ceiling over independent samples, showing remaining verifier headroom.}
    \label{fig:hero_pareto}
\end{figure}

At the headline budget, ORM Rerank$@8$ reaches $75.13\%$ accuracy while PRM Guided $K{=}8$ reaches $65.18\%$; at the larger budget, ORM Rerank$@32$ reaches $82.71\%$ while PRM Guided $K{=}32$ reaches $70.02\%$ (Table~\ref{tab:pareto}, Figure~\ref{fig:hero_pareto}).
The gap is $9.95$\,pp at $K{=}8$ and $12.69$\,pp at $K{=}32$.
PRM Guided still beats Majority@$N$ at both budgets, so the PRM carries real signal; the same compute buys better final answers when spent on independent samples plus a final-state ORM. On the same samples, ORM Rerank beats Majority@$N$ by $15.1$\,pp at both budgets, so the final-state verifier, not sampling alone, drives the gain.
Measured wall-clock ratios agree with forward-pass predictions within $9\%$, and the ordering survives wall-clock matching: ORM Rerank$@6$ reaches $72.40\%$ in about $126$\,s per problem, against $65.18\%$ in $156$\,s for PRM Guided $K{=}8$ (App.~\ref{app:compute}).

ORM Rerank$@8$ beats every PRM Guided budget we ran, including $K{=}32$, which spends four times its compute and reaches $70.02\%$; the best PRM Guided result, $73.67\%$ at $K{=}24$, still trails it. Across the full sweep, ORM Rerank improves at every budget from $N{=}2$ to $N{=}32$, while PRM Guided is not monotone in $K$ (App.~\ref{app:full_pareto}). Even ORM Rerank$@32$ sits $8.4$\,pp below the $91.13\%$ of Oracle$@32$, so better final verifiers still have room to grow. The advantage does not hinge on the argmax rule: verifier-weighted voting~\citep{li2023makinglargelanguagemodels} stays within $0.4$\,pp of ORM Rerank at every budget.

A stronger final selector does not rescue the candidate pool: even a perfect selector over the PRM Hybrid $K{=}8$ pool reaches only $67.30\%$, $7.83$\,pp below ORM Rerank$@8$ (Section~\ref{sec:mechanism-diversity}). The top-$M$ and SMC controls in Sections~\ref{sec:mechanism-diversity} and~\ref{sec:mechanism-terminal} measure how much of this loss is recovered by keeping more candidates.

\begin{table}[t]
\centering
\small
\caption{Matched-compute GSM8K accuracy with Dream-v0-Instruct-7B. ORM Rerank leads at both budgets. PRM Guided entries average three independent runs at $K{=}8$ and two at $K{=}32$ (App.~\ref{app:full_pareto}). The diagnostic rows isolate causal PRM readout. Bold: best comparable value in each accuracy column.}
\label{tab:pareto}
\resizebox{0.82\linewidth}{!}{%
\begin{tabular}{l c c cc}
\toprule
\rowcolor{ArxivTableHead}
 & & & \multicolumn{2}{c}{\tablehead{GSM8K Accuracy (\%)}} \\
\cmidrule{1-5}
\rowcolor{ArxivTableHead}
\tablehead{Method} & \tableheadmath{N/K} & \tablehead{Forward passes} & \tableheadmath{N{=}8 / K{=}8} & \tableheadmath{N{=}32 / K{=}32} \\
\midrule
\rowcolor{ArxivGroupNeutral}
\multicolumn{5}{l}{\tablehead{Matched-compute comparison}} \\
\midrule
Vanilla & 1 & 128 & \multicolumn{2}{c}{43.14} \\
Majority@$N$ & $N$ & $N{\times}128$ & 60.05 & 67.63 \\
PRM Guided ($\BranchEvery{=}64$) & $K$ & $130K$ & 65.18 & 70.02 \\
\rowcolor{ArxivTableRow}
ORM Rerank@$N$ & $N$ & $N{\times}128+N$ & \best{75.13} & \best{82.71} \\
Oracle@$N$ & $N$ & $N{\times}128$ & \emph{81.05} & \emph{91.13} \\
\midrule
\rowcolor{ArxivGroupNeutral}
\multicolumn{5}{l}{\tablehead{Causal PRM readout diagnostic}} \\
\midrule
Causal PRM Rerank, mean pool & $N$ & $N{\times}128+N$ & 43.90 & 40.56 \\
Causal PRM Rerank, last token & $N$ & $N{\times}128+N$ & 49.66 & \best{50.64} \\
\bottomrule
\end{tabular}}
\end{table}


\begin{table}[t]
\centering
\small
\caption{Controls beyond the GSM8K headline. \emph{Top:} task-specific verifiers on held-out MATH and MBPP problems. \emph{Bottom:} terminal scoring on the same GSM8K candidate pool. Splits and full results are in App.~\ref{app:additional_matched_controls}. Bold: best comparable value in each result column.}
\label{tab:additional_matched_main}
\resizebox{0.72\linewidth}{!}{%
\begin{tabular}{lccc}
\toprule
\rowcolor{ArxivTableHead}
\tablehead{Task-specific verifiers} & \tablehead{ORM Rerank} & \tablehead{PRM Rerank} & \tablehead{PRM Guided} \\
\midrule
MATH, $N{=}K{=}8$ & \best{30.65} & 20.80 & 20.80 \\
MBPP & 63.04 & \best{65.47} & 50.88 \\
\midrule
\rowcolor{ArxivGroupNeutral}
\tablehead{GSM8K, same candidate pool} & \tablehead{ORM Rerank} & \tablehead{Final-state PRM} & \tablehead{Cross-mask PRM} \\
\midrule
$N{=}8$ & 75.13 & \best{75.40} & 42.84 \\
$N{=}32$ & 82.71 & \best{82.79} & 65.35 \\
\bottomrule
\end{tabular}}
\end{table}

The ordering carries over to other tasks once the verifier is trained for the task (Table~\ref{tab:additional_matched_main}); a GSM8K-trained ORM does not transfer to MATH500 (App.~\ref{app:math500_full}). On MATH~\citep{hendrycks2021math} at $N{=}K{=}8$, ORM Rerank reaches $30.65\%$ against $20.80\%$ for both PRM Rerank and deterministic PRM Guided; the ORM lead holds on identical stored candidates, by $11.10$\,pp (App.~\ref{app:additional_matched_controls}). On MBPP~\citep{austin2021program}, ORM Rerank reaches $63.04\%$ against $50.88\%$ for PRM Guided. The MBPP PRM already reaches $65.47\%$ when reranking final programs, on par with the ORM, so the $14.6$\,pp it loses under guidance is the cost of guidance itself.

Same-pool reranking separates terminal scoring from pruning. On the GSM8K candidate pool, the final-state PRM matches ORM Rerank at both budgets, while the cross-mask PRM trails it by $32.3$ and $17.4$\,pp (Table~\ref{tab:additional_matched_main} and App.~\ref{app:scorer_breakdown}). At $N{=}8$ the cross-mask PRM reranks no better than random selection (App.~\ref{app:sanity}), even though its pooled ROC-AUC on final states is $0.78$: pooled discrimination does not imply within-problem ranking (Proposition~\ref{prop:pooled_main}). Its signal shows up in larger pools, where it beats random selection by $8.6$\,pp at $N{=}16$ and $22.1$\,pp at $N{=}32$. The PRM is a capable final selector once trained on final states; the gap comes from cross-mask training and, under guidance, from pruning.


\section{Why Deterministic PRM Guidance Underperforms}
\label{sec:mechanism}

The matched-compute gap has two parts. The first is pool damage from guidance: mask-ratio signal decay and deterministic pruning remove correct candidates before any final selector sees them. The second is terminal selection: trained across all mask ratios, the PRM separates correct from incorrect final solutions with ROC-AUC $0.78$, against $0.96$ for the ORM trained on final states alone. We then analyze causal readout mismatch as a separate diagnostic for causal PRM variants and test whether right context explains the bidirectional advantage. Table~\ref{tab:decomposition} shows both failures at the headline budget.

\begin{table}[t]
\centering
\small
\caption{Pool quality and final selection on GSM8K at the headline budget ($K{=}N{=}8$). \emph{Top:} the first two columns measure the candidate pool and the last three the accuracy of the final choice from it. Top-$1$ guidance damages the pool; SMC repairs most of it, yet the cross-mask PRM still selects at the top-$1$ level, while final-state scorers select well from independent samples. \emph{Bottom:} how often a PRM cut removes every lineage that later reaches a correct answer, by the stored state at which the cut is applied. Dashes mark configurations that were not run. Full results are in App.~\ref{app:additional_matched_controls}. Bold: best comparable value in each accuracy column.}
\label{tab:decomposition}
\resizebox{0.75\linewidth}{!}{%
\begin{tabular}{lccc>{\columncolor{ArxivTableCol}}cc}
\toprule
\rowcolor{ArxivTableHead}
 & \multicolumn{2}{c}{\tablehead{Candidate pool}} & \multicolumn{3}{c}{\tablehead{Accuracy of the final choice (\%)}} \\
\cmidrule(lr){2-3}\cmidrule(lr){4-6}
\rowcolor{ArxivTableHead}
\tablehead{Search} & \tablehead{Oracle@8 (\%)} & \tablehead{Unique answers} & \tablehead{Cross-mask PRM} & \tablehead{ORM} & \tablehead{Final-state PRM} \\
\midrule
Independent samples & 81.05 & 4.31 & 42.84 & \cellcolor{ArxivTableMix}\best{75.13} & \best{75.40} \\
Top-$1$ guidance & 67.30 & 1.75 & 65.18 & -- & -- \\
SMC & 77.89 & 3.95 & 65.48 & -- & -- \\
\midrule
\rowcolor{ArxivGroupNeutral}
\tablehead{Removal risk (\%)} & \tablehead{Initial state} & \tablehead{Middle state} & \tablehead{Final state} & & \\
\midrule
Top-$1$ cut & 46.06 & 37.02 & 19.92 & & \\
Top-$2$ cut & -- & 22.54 & 13.01 & & \\
Top-$4$ cut & -- & 9.94 & 5.93 & & \\
\bottomrule
\end{tabular}}
\end{table}

\begin{figure*}[t]
    \centering
    \includegraphics[width=\textwidth]{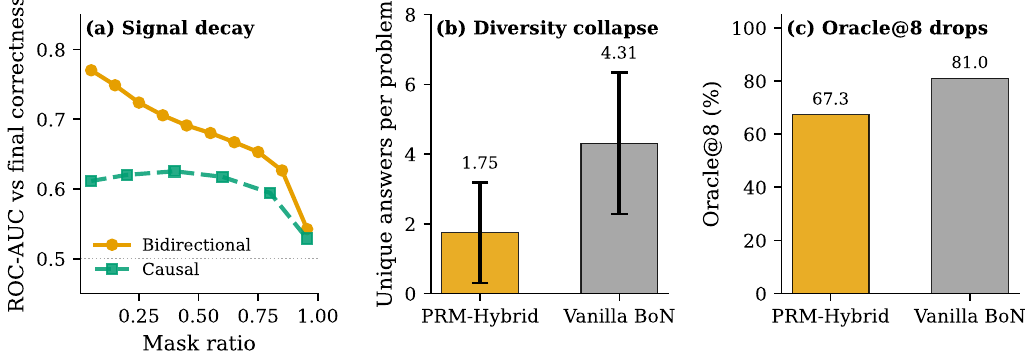}
    \caption{\textbf{Mechanisms behind the matched-compute gap.}
    \textbf{(a)} PRM discrimination degrades as mask ratio increases.
    \textbf{(b)} Deterministic top-$1$ pruning reduces answer diversity; error bars show the standard deviation across problems, and Vanilla BoN denotes best-of-$8$ independent samples.
    \textbf{(c)} The reachable Oracle ceiling falls, showing that guidance prunes away correct candidates that independent sampling would have retained.}
    \label{fig:mechanism}
\end{figure*}

\subsection{Mask-Ratio Signal Decay}
\label{sec:mechanism-decay}

Pool damage starts with a weak early signal.
Bidirectional PRM ROC-AUC falls monotonically from $0.77$ on nearly decoded states to $0.54$ on almost fully masked ones (Figure~\ref{fig:mechanism}(a)). No scorer can avoid this trend once the state stops carrying information about the outcome:

\begin{proposition}[informal]
\label{prop:mi_main}
For any scorer, ROC-AUC on states at a given mask ratio is at most $\tfrac12+\sqrt{I(\bx_t;y)/(2\pi_0\pi_1)}$, where $I(\bx_t;y)$ is the mutual information between the state $\bx_t$ and final correctness $y$, and $\pi_0,\pi_1$ are the class priors.
\end{proposition}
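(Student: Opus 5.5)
The plan is to route the bound through total variation distance between the two class-conditional distributions of the state. Fix a mask ratio and write $P_0, P_1$ for the laws of $\bx_t$ given $y=0$ and $y=1$. For any scorer $s=r(\bx_t)$, ROC-AUC is $\Pr[s(X_1)>s(X_0)]+\tfrac12\Pr[s(X_1)=s(X_0)]$, with $X_1\sim P_1$ and $X_0\sim P_0$ independent. The chain of bounds is AUC $\to$ TV $\to$ KL $\to$ mutual information.

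\textbf{Step 1: AUC in terms of total variation.} First I show $\mathrm{AUC}\le \tfrac12+\mathrm{TV}(P_0,P_1)$. By data processing it suffices to work with the pushforward laws $Q_0,Q_1$ of $s$, since $\mathrm{TV}(Q_0,Q_1)\le\mathrm{TV}(P_0,P_1)$. For scalar distributions,
\[
\mathrm{AUC}-\tfrac12 = \tfrac12\,\mathbb{E}\bigl[\mathrm{sign}(S_1-S_0)\bigr].
\]
I then swap $S_1\sim Q_1$ for an independent copy drawn from $Q_0$. This changes the expectation of a $[-1,1]$-valued function by at most $2\,\mathrm{TV}(Q_0,Q_1)$. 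Under the swap the expectation is $0$ by symmetry, giving AUC $\le \tfrac12+\mathrm{TV}$.

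\textbf{Step 2: mutual information in terms of KL.} Write $P$ for the marginal law of $\bx_t$. Then
\[
I(\bx_t;y)=\pi_0\,\mathrm{KL}(P_0\|P)+\pi_1\,\mathrm{KL}(P_1\|P).
\]
Pinsker's inequality gives $\mathrm{KL}(P_i\|P)\ge 2\,\mathrm{TV}(P_i,P)^2$. Since $P=\pi_0P_0+\pi_1P_1$, the differences are $P_0-P=\pi_1(P_0-P_1)$ and $P_1-P=\pi_0(P_1-P_0)$, so
\[
\mathrm{TV}(P_0,P)=\pi_1\,\mathrm{TV}(P_0,P_1),\qquad \mathrm{TV}(P_1,P)=\pi_0\,\mathrm{TV}(P_0,P_1).
\]
Hence
\[
I \;\ge\; 2\,\mathrm{TV}(P_0,P_1)^2\bigl(\pi_0\pi_1^2+\pi_1\pi_0^2\bigr) \;=\; 2\pi_0\pi_1\,\mathrm{TV}(P_0,P_1)^2 .
\]
This yields $\mathrm{TV}\le\sqrt{I/(2\pi_0\pi_1)}$. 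Combining with Step 1 gives the stated bound.

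\textbf{Main obstacle.} The hardest point is Step 1, specifically ties and the exact constant. The $\tfrac12$-for-ties convention must be what makes the symmetric baseline exactly $\tfrac12$. Writing the AUC excess as $\tfrac12\mathbb{E}[\mathrm{sign}(S_1-S_0)]$, with $\mathrm{sign}(0)=0$, handles both cleanly. The swap costs at most $\tfrac12\cdot 2\,\mathrm{TV}=\mathrm{TV}$, so the constant in the statement comes out exactly. The "informal" qualifier presumably covers measurability and the fact that mask ratio is conditioned on, i.e.\ all quantities are computed at a fixed mask-ratio bucket.
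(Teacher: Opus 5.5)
Your proof is correct and follows the paper's route: the paper likewise bounds the AUC excess by $\mathrm{TV}(Q_1,Q_0)$ via the mid-distribution function $F_0^{\mathrm{mid}}(s)=\Pr_{Q_0}(S<s)+\tfrac12\Pr_{Q_0}(S=s)$, which equals your $\tfrac12\bigl(1+\mathbb{E}[\mathrm{sign}(s-S_0)]\bigr)$ once $S_0$ is integrated out, and then passes to $\mathrm{TV}(P_1,P_0)$ by data processing and applies Pinsker. Your decomposition $I=\pi_0\,\mathrm{KL}(P_0\|P)+\pi_1\,\mathrm{KL}(P_1\|P)$ fills in the step $I\ge 2\pi_0\pi_1\,\mathrm{TV}(P_1,P_0)^2$, which the paper states without derivation.
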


As masking removes that information, the bound falls toward chance, so decay with mask ratio is expected for any scorer, not only ours (Proposition~\ref{prop:mi_auc} in App.~\ref{app:theory}).

The causal PRM is weak across all mask-ratio buckets, not only at early denoising stages.
Trained on identical data, it stays between $0.60$ and $0.63$ through low and middle mask ratios and remains below the bidirectional PRM in every bucket; \Cref{sec:mechanism-causal} shows that much of the final-state gap comes from mean-pool readout rather than the attention mask itself.

The decay is not an artifact of inherited binary labels. Relabeling $10{,}000$ training states with eight fresh rollouts each and retraining under an otherwise identical setup raises pooled ROC-AUC by only $0.011$ and moves GSM8K accuracy by $0.61$\,pp, small next to the $10$\,pp gap to ORM Rerank. Scored against fresh rollouts instead, the decay stays intact, even though the two label sources disagree on $21\%$ of states in the most-masked bucket, against $0.5\%$ in the least-masked one (App.~\ref{app:additional_matched_controls}).

\subsection{Diversity Collapse Under Deterministic Pruning}
\label{sec:mechanism-diversity}

Deterministic pruning then collapses answer diversity, an effect known from beam search~\citep{holtzman2020curious,eikema2020map} that is severe in dLLM guidance.
At $K{=}N{=}8$, PRM Hybrid keeps only $1.75$ unique answers per problem against $4.31$ for independent samples, and answer entropy falls fourfold.

The diversity collapse lowers the best possible accuracy of the guided candidate pool.
Oracle$@8$ drops from $81.05\%$ for independent samples to $67.30\%$ for PRM Hybrid, a $13.75$\,pp ceiling loss: PRM Guided prunes away correct candidates that simple sampling keeps.

Keeping more candidates and guiding later both help, but neither closes the gap.
Top-$M{=}2$ reaches $69.70\%$, $4.5$\,pp above top-$1$ and still $5.4$\,pp below ORM Rerank@$8$; in single-run sweeps, accuracy rises at every step from $59.4\%$ at $\BranchEvery{=}16$ to $66.5\%$ at $\BranchEvery{=}64$, even though the most frequent guidance also costs the most, $1{,}088$ passes against $1{,}040$, consistent with querying the PRM where its signal is stronger (Apps.~\ref{app:topm} and~\ref{app:branching}).

An offline counterfactual over the stored trajectories locates the pool damage early in denoising. A top-$1$ cut removes every lineage that eventually reaches a correct answer in $46\%$ of cases at the initial stored state and in $20\%$ at the final one. Wider cuts shrink the risk: keeping four candidates brings it to $10\%$ at the middle state and $6\%$ at the final one (Table~\ref{tab:decomposition}).

\subsection{The PRM as a Final Judge}
\label{sec:mechanism-terminal}

Repairing the pool does not repair the final choice.
An SMC sampler~\citep{del2006smc} that resamples $K{=}8$ particles by PRM score at the same $1{,}040$ passes (App.~\ref{app:algo}) restores most of the pool: Oracle@$8$ rises from $67.30\%$ to $77.89\%$, recovering $10.59$ of the $13.75$ points that top-$1$ pruning loses against independent sampling. Its PRM-selected accuracy stays at $65.48\%$ with weighted answer voting and $66.34\%$ with the top-scoring particle, level with top-$1$ guidance and well below the $75.13\%$ of ORM Rerank@$8$ (Table~\ref{tab:decomposition} and App.~\ref{app:additional_matched_controls}). With the pool repaired, the gap sits in the final choice, which the cross-mask PRM makes less reliably than a final-state verifier (\Cref{sec:empirical}).

Good pooled discrimination does not protect the final choice:

\begin{proposition}[informal]
\label{prop:pooled_main}
For every $N\ge 2$ and $\epsilon\in(0,1)$, some scorer has pooled ROC-AUC $1-\epsilon$ while its top-$1$ choice among the $N$ candidates of each problem is no better than a uniform random pick.
\end{proposition}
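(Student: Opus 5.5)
The plan is an explicit construction in which the score is driven almost entirely by a problem-level offset, and the within-problem ordering is deliberately uninformative. Fix $N\ge 2$ and $\epsilon\in(0,1)$. Consider a population of problems, each with $N$ candidates, and choose problems of two types. \emph{Easy} problems have all $N$ candidates correct. \emph{Mixed} problems have exactly one correct and $N-1$ incorrect candidates. On an easy problem the scorer assigns every candidate the value $1+\xi$, with small independent jitter $\xi$. On a mixed problem it assigns every candidate $\xi$, so all scores are near $0$. On a mixed problem the within-problem ranking is determined by the i.i.d. continuous jitter alone, independent of correctness. Hence the argmax is uniform over the $N$ candidates, and the top-$1$ accuracy on each problem equals that of a uniform random pick: $1$ on easy problems and $1/N$ on mixed ones. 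This settles the selection half of the claim for any mixture of the two types.

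Next I compute pooled ROC-AUC, which is the probability that a uniformly random (correct, incorrect) pair drawn from the whole pool is ordered correctly. Let a fraction $p$ of problems be easy. Correct candidates come from easy problems, $pN$ per problem on average, and from mixed problems, $(1-p)$ per problem. All incorrect candidates come from mixed problems. Correct candidates from easy problems beat every incorrect one. Correct candidates from mixed problems beat an incorrect one with probability $1/2$ by exchangeability of the jitter. So
\[
\mathrm{AUC} \;=\; \frac{pN + \tfrac12(1-p)}{pN + (1-p)} \;=\; 1 - \frac{(1-p)/2}{pN+(1-p)} .
\]
This is continuous in $p$. It equals $1/2$ at $p=0$ and tends to $1$ as $p\to 1$. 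By the intermediate value theorem, some $p\in[0,1)$ gives exactly $1-\epsilon$ whenever $\epsilon\le 1/2$. Solving the linear equation gives this $p$ explicitly.

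For $\epsilon>1/2$ the construction must push AUC below $1/2$. I will add a third type, \emph{hard} problems, with all candidates incorrect and scores $1+\xi$ placed above the mixed problems. On hard problems every pick is wrong, which matches random selection. Mixing the three types then sweeps AUC continuously over all of $(0,1)$, so the intermediate value argument applies again.

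I expect the main obstacle to be the precise formalization rather than the calculation. \enquote{No better than a uniform random pick} must be read as equality of expected top-$1$ accuracy per problem, and ties must be handled, which is why I use continuous jitter rather than exact ties. The remaining care is to make the bad-AUC regime honest with the hard problems, or simply to note that the informal statement is really about large $1-\epsilon$. I will also remark that the construction is realistic: it mirrors a scorer that learns problem difficulty, as the cross-mask PRM plausibly does, without learning within-problem correctness.
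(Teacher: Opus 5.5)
Your construction is correct, and it uses the paper's mechanism. The scorer depends only on a problem-level type, so it is constant within each problem up to your jitter and the top-$1$ choice is a uniform pick. Across problems the type correlates with correctness, which is what makes the pooled AUC high.

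The two proofs differ in how they hit exactly $1-\epsilon$. The paper gives each problem a uniform latent type $z\in\{0,1\}$. Each candidate is independently correct with probability $1-\epsilon$ if $z=1$ and $\epsilon$ if $z=0$, and the scorer is $r(x)=z$. Counting ties as $\tfrac12$, the pooled AUC is $(1-\epsilon)^2+\tfrac12\cdot 2\epsilon(1-\epsilon)=1-\epsilon$ for every $\epsilon\in(0,1)$ at once. So the paper needs no mixture tuning and no separate case for $\epsilon>\tfrac12$.

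Your route fixes the per-problem compositions and calibrates through the mixing weight instead. For $\epsilon\le\tfrac12$ this is explicitly $p=(\tfrac12-\epsilon)/(\tfrac12-\epsilon+\epsilon N)$, and the hard type covers $\epsilon>\tfrac12$. That costs a case split, but it buys two things. It does not depend on a tie convention in the AUC or on a tie-breaking rule in the argmax. And your mixed problems have exactly one correct candidate, which is the setting of Corollary~\ref{cor:kendall_top1}.

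One formal point: the paper's scorer is a function $r:\mathcal X\to\mathbb R$ of the candidate. To match that, attach the jitter to each candidate as an auxiliary feature independent of correctness, rather than letting the scorer draw it at random.
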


A scorer that tracks problem difficulty rather than candidate quality achieves this: it separates easy problems from hard ones across the pool but gives every candidate of a problem the same score (Proposition~\ref{prop:pooled_auc_rerank} in App.~\ref{app:theory}). The cross-mask PRM is not blind within problems. On the $1{,}182$ test problems that contain both correct and incorrect candidates, its scores correlate with correctness, with median Kendall $\tau$ of $0.44$, and are higher on average for correct candidates in $88\%$ of them; a moderate pairwise edge of this kind need not survive a top-$1$ pick among many candidates (Corollary~\ref{cor:kendall_top1}).

\subsection{Causal PRM Readout Mismatch}
\label{sec:mechanism-causal}

Most of the causal PRM's weakness comes from mean pooling, not from causal attention. On final states, the mean-pooled causal PRM reaches ROC-AUC $0.61$ against $0.78$ for the bidirectional PRM. Last-token pooling, standard for AR reward models~\citep{ouyang2022instructgpt,stiennon2020summarize}, raises it to $0.73$ and closes about $70\%$ of the gap. A smaller bidirectional advantage survives both longer training and the ORM protocol. Doubling training from $15$K to $31$K steps narrows the classification-accuracy gap in the least-masked bucket from $13.6$ to $9.5$\,pp, and the bidirectional PRM stays ahead in all ten mask buckets; trained on final states only, the causal scorer still trails by $7.8$\,pp (Apps.~\ref{app:one_epoch} and~\ref{app:orm_protocol_controls}). Mean pooling hurts only the causal scorer: under the ORM protocol, the bidirectional scorer reaches $91.83\%$ accuracy with last-token pooling and $91.78\%$ with mean pooling.

\begin{figure}[t]
    \centering
    \includegraphics[width=0.75\columnwidth]{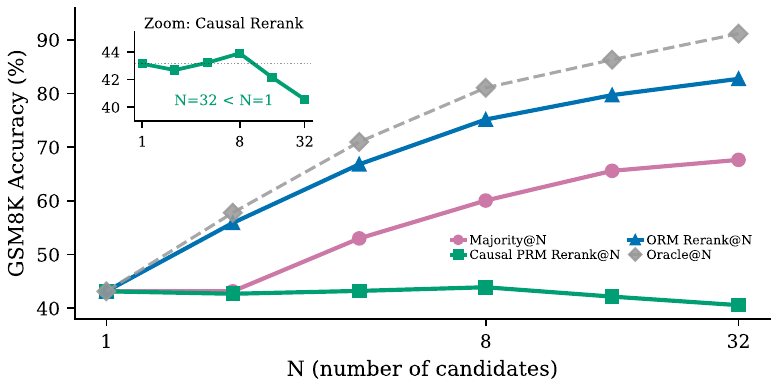}
    \caption{\textbf{Mean-pooled causal PRMs fail as best-of-$N$ rerankers.}
    Majority and ORM Rerank improve with more candidates, while the mean-pooled causal PRM remains near the single-sample baseline and dips at high $N$. The inset highlights the nonmonotonic region.}
    \label{fig:causal_rerank}
\end{figure}

The readout also explains the reranking failure in Figure~\ref{fig:causal_rerank}: at $N{=}32$ the mean-pooled causal reranker falls to $40.56\%$, below a single sample, while the last-token variant rises to $50.64\%$ (App.~\ref{app:lasttoken_pareto}). Both remain far below the $82.71\%$ of ORM Rerank@$32$, as does the bidirectional PRM at $65.35\%$: the terminal selection gap of \Cref{sec:empirical} (App.~\ref{app:scorer_breakdown}).

\begin{table}[t]
\centering
\small
\caption{One GSM8K test problem (id $526$), the one sketched in Figure~\ref{fig:framework}(A): Ada uses $12$\,kWh a day, adds a device that uses $2$\,kWh a day, and pays $\$1.50$ per kWh; the question asks for the weekly bill difference, which is $\$21$. Three of the independent candidates are shown with their causal PRM scores.}
\label{tab:case526}
\begin{tabular}{llcc}
\toprule

\rowcolor{ArxivTableHead}
\tablehead{Candidate} & \tablehead{Final computation} & \tablehead{Causal PRM score} & \tablehead{Correct} \\
\midrule
\mdseries
$t{=}13$ & $2 \times \$1.50 = \$30$ & $+1.001$ & $\times$ \\
$t{=}25$ & $(12-2) \times \$1.50 = \$15$ & $-0.827$ & $\times$ \\
$t{=}3$ & $2 \times \$1.50 \times 7 = \$21$ & $-0.908$ & $\checkmark$ \\
\bottomrule
\end{tabular}
\end{table}

Table~\ref{tab:case526} shows the failure on a single problem: the causal PRM ranks a wrong trajectory first and the correct one last of the three, while the ORM selects the correct one. The same inversion, a confident preference for a wrong trajectory while a correct one is available, occurs on $639$ of the $1{,}319$ test problems, or $48\%$ (App.~\ref{sec:case_study}).

\begin{figure}[t]
    \centering
    \includegraphics[width=0.55\textwidth]{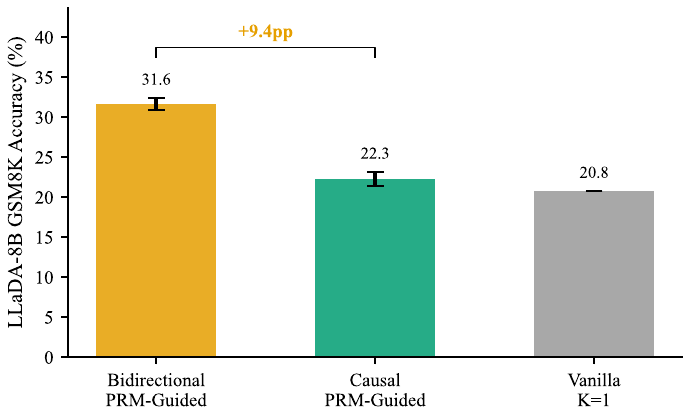}
    \caption{\textbf{Bidirectional PRMs beat causal PRMs on LLaDA-8B-Base.} Mean GSM8K accuracy of PRM Guided ($K{=}8$) over all tested configurations on the full test set, with the single-trajectory Vanilla baseline; error bars show standard deviations across configurations.}
    \label{fig:llada_prelim}
\end{figure}

LLaDA reproduces the bidirectional advantage on a second dLLM backbone: averaged over all tested configurations on the full GSM8K test set, bidirectional PRM guidance reaches $31.64\%$ against $22.25\%$ for causal PRM guidance (Figure~\ref{fig:llada_prelim} and App.~\ref{app:llada_prelim}). Over the $20.77\%$ Vanilla baseline, bidirectional guidance adds $10.87$\,pp and causal guidance only $1.48$\,pp.

\subsection{Right Context Alone Does Not Explain the Advantage}
\label{sec:mechanism-ruleout}

Could the bidirectional PRM win simply by reading tokens to the right of each position? Two controls do not support this. Zeroing the right half of each snapshot changes the bidirectional PRM's ROC-AUC by about $0.001$, and averaging a causal PRM over forward and reversed inputs adds nothing and leaves it $0.084$ below the bidirectional PRM, because the reversed pass scores at chance (ROC-AUC $0.500$) on a RoPE backbone~\citep{su2024roformer} that never saw reversed text (App.~\ref{app:m5_caveats}). Three explanations remain open: joint modeling of unordered positions, readout design, and a shift in training distribution. The first is the most actionable: bidirectional attention lets every position condition on whichever tokens are revealed, while a causal scorer reads a scattered partial state in one fixed left-to-right order. A permutation-LM scorer~\citep{yang2019xlnet}, trained over arbitrary orderings yet autoregressive at inference, would test it (Apps.~\ref{app:m4m5_figure} and~\ref{app:hypothesis_family}).


\section{Discussion}
\label{sec:discussion}

\textbf{Recommendations.} The results translate into five practices for reward-guided dLLM decoding. Charge intermediate-state scorers and final-state verifiers in the same forward-pass unit, or hidden scorer cost confounds the comparison. Report Oracle@$K$ next to accuracy: it exposes candidate-pool collapse independently of the final selector. Guide late and keep more than one candidate: sparse branching and top-$2$ retention both recover accuracy, and keeping four candidates cuts the risk of losing every correct lineage at the final state from $20\%$ to $6\%$ (\Cref{sec:mechanism-diversity}). Hand the final choice to a verifier trained on final states, because pooled scorer quality does not guarantee within-problem ranking: the cross-mask PRM, with pooled ROC-AUC $0.78$ on final states, reranks no better than random selection at $N{=}8$ (Proposition~\ref{prop:pooled_main}). For causal PRMs, fix the readout first: last-token pooling recovers about $70\%$ of the ROC-AUC gap to the bidirectional PRM. These checks are not specific to diffusion: any search that prunes on intermediate scores and selects with the same scorer, including PRM beam search for AR models~\citep{snell2024scaling}, splits into pool damage and selection error the same way.

\textbf{Threats to validity.} The central result uses Dream-7B and the deterministic top-$1$ PRM Guided recipe, with GSM8K as the primary task and MATH and MBPP as task-specific controls. The SMC control covers one diversity-preserving sampler; other stochastic decoders can be benchmarked with the released toolkit. The ORM advantage requires a verifier trained for the task: a GSM8K-trained ORM does not transfer to MATH500. LLaDA reproduces the bidirectional advantage; an LLaDA-specific ORM comparison and the permutation-LM test are natural next steps.

\textbf{Conclusion.} dLLM intermediate states carry usable reward signal, but deterministic guidance spends it in the wrong place: it prunes early, where the scorer is least reliable, and on GSM8K and MATH it leaves the final choice to a weaker intermediate-state scorer. Under matched forward-pass compute, independent sampling with a task-matched ORM beats deterministic PRM guidance on GSM8K, MATH, and MBPP; on MBPP the PRM already judges finished programs as well as the ORM, so the loss there is the cost of guidance itself. The decomposition gives dLLM guidance two concrete targets: keep correct partial solutions alive through early denoising, and leave the final choice to a verifier trained on final states. We release the snapshot corpus and evaluation toolkit to measure both under the same protocol.

{\small\bibliographystyle{acl_natbib}\bibliography{references}}
\newpage
\appendix
\section{Evaluation Protocol and Implementation Details}
\label{app:protocol_details}

\subsection{PRM training details}
\label{app:prm_training}

The PRM adds trainable LoRA adapters~\citep{hu2022lora} with $r{=}16$ and $\alpha{=}32$ on \texttt{q\_proj} and \texttt{v\_proj} to the frozen dLLM backbone. Main comparison PRMs train with BCE loss for $2{,}000$ steps at batch size $32$, with a cosine learning rate schedule from $2{\times}10^{-5}$ to $0$.

\subsection{Answer-extraction sensitivity}
\label{app:extraction}

The main text uses a strict regex extractor (\texttt{extract\_gsm8k\_answer}) that checks $\#\#\#\#$, ``answer is:'', $\backslash\texttt{boxed}\{\}$, and a last-integer fallback, in that order.
The \texttt{lm-eval-harness}~\citep{gao2024lmeval} \texttt{flexible-extract} regex for GSM8K instead takes the last number in the output, which raises Vanilla accuracy on Dream-7B from $43.14\%$ (strict) to $54.28\%$ (flexible).
The $+11.14$\,pp offset comes from extraction alone, and last-number matching breaks on the distractor numerals common in dLLM outputs, so we use strict extraction throughout.

\subsection{PRM Guided algorithm}
\label{app:algo}

Algorithm~1 in Section~\ref{sec:accounting} specifies the PRM Guided procedure evaluated throughout the paper. This section gives its forward-pass cost and the variants used in the controls.

The per-sample forward-pass cost decomposes as $KT$ denoising passes plus $\mathrm{segments} \times K$ PRM scoring passes, with $\mathrm{segments}{=}\lceil T/\BranchEvery\rceil$. For the headline setting $T{=}128$, $\BranchEvery{=}64$: $\mathrm{segments}{=}2$, giving $128K + 2K = 130K$ total forward passes per final sample. With $K{=}8$ this is $1{,}040$, and with $K{=}32$ it is $4{,}160$; App.~\ref{app:compute} compares both with ORM Rerank. For the non-divisible case $\BranchEvery{=}48$ used in Apps.~\ref{app:branching} and~\ref{app:llada_prelim}, the three segments have $48$, $48$, and $32$ denoising steps, giving $128K + 3K = 131K$ passes, $0.8\%$ more than at $\BranchEvery{=}64$.

\paragraph{PRM Hybrid variant.}
Identical to Algorithm~1 except that the \emph{final} segment skips the top-$1$ prune and returns all $K$ candidates, from which any downstream selector, such as majority voting, the ORM, or the Oracle, picks one. \Cref{sec:mechanism-diversity} uses PRM Hybrid to measure Oracle ceilings and candidate-pool diversity.

\paragraph{Top-$M$ retention variant.}
This generalization relaxes top-$1$ pruning: at each segment it keeps the $M$ highest-scoring candidates, and each retained state spawns $K/M$ children at the next segment, so the total branch width stays $K$. For $M{=}1$ this reduces to Algorithm~1; for $M{=}K$ it coincides with running $K$ parallel chains without pruning. App.~\ref{app:topm} evaluates $M \in \{1,2\}$ at $K{=}8$.

\paragraph{ESS-tempered SMC variant.}
$K$ particles are weighted by tempered PRM scores after $64$ denoising steps and resampled systematically when the effective sample size (ESS) falls below a fixed threshold~\citep{del2006smc}; the final answer comes from either the weighted answer cluster or the highest-scoring particle after all $128$ steps. The sampler spends the same $1{,}040$ forward passes as PRM Guided at $K{=}8$, and App.~\ref{app:smc} reports its results.

\paragraph{Alternative guidance algorithms.}
Further alternatives not evaluated here include
energy-tilted sampling that blends the dLLM transition $p_\theta(\bx_{t-1}\mid\bx_t)$ with a PRM-weighted distribution $\propto p_\theta \cdot \exp(\beta r_\phi)$ via an annealing schedule,
and adaptive branching schedules that condition $\BranchEvery$ on the current mask ratio, consulting the PRM only where the ROC-AUC curve of \Cref{sec:mechanism-decay} is high.
The decomposition in \Cref{sec:mechanism} points to pairing diversity-preserving search with a final-state verifier as the terminal selector.

\subsection{Compute accounting}
\label{app:compute}

Per-sample forward-pass counts in Table~\ref{tab:pareto} decompose as follows.
A single denoising trajectory requires $T{=}128$ forward passes through the backbone.
Majority$@N$ and ORM Rerank$@N$ generate $N$ independent trajectories at $N \times 128$ denoising passes, and ORM Rerank adds $N$ ORM scoring passes.
PRM Guided with $K$ candidates and $\BranchEvery$ steps between branching performs $\lceil T / \BranchEvery\rceil$ segments and scores $K$ candidates in each.
For $\BranchEvery{=}64$ and $T{=}128$, the $2$ segments give $2 \times K \times \BranchEvery = 128K$ denoising passes plus $2K$ PRM evaluations, one per segment including the final, for $130K$ in total.
PRM Guided\,$K{=}8$ spends $1{,}040$ passes against $1{,}032$ for ORM Rerank$@8$, and PRM Guided\,$K{=}32$ spends $4{,}160$ against $4{,}128$ for ORM Rerank$@32$, both $+0.78\%$. The budgets are matched within $1\%$, and the small asymmetry favors PRM Guided.

\paragraph{Wall-clock validation of forward-pass matching.}
Hardware: $1 \times$ NVIDIA H20 $96$GB GPU, batch size 1 per problem, bfloat16 inference. The table reports per-problem wall-clock time averaged over the $1{,}319$ test problems in two independent runs, $2{,}638$ measurements in total.

\begin{center}
\small
\begin{tabular}{lrr}
\toprule

\rowcolor{ArxivTableHead}
\tablehead{Method} & \tablehead{Predicted passes} & \tablehead{Measured wall-clock (s/problem)} \\
\midrule

\mdseries
Vanilla $K{=}1$ (Dream-7B) & $128$ & $20.71$ \\
PRM Guided $K{=}8, \BranchEvery{=}64$ & $1{,}040$ & $155.95$ \\
PRM Hybrid $K{=}8, \BranchEvery{=}64$ & $1{,}040$ & $155.97$ \\
PRM Guided $K{=}32, \BranchEvery{=}64$ & $4{,}160$ & $612.61$ \\
Vanilla $K{=}1$ (LLaDA-8B) & $128$ & $23.21$ \\
PRM Guided $K{=}8$ (LLaDA) & $1{,}040$ & $172.02$ \\
ORM scoring, per candidate & $1$ & $\approx 0.3$ \\
\bottomrule
\end{tabular}
\end{center}

The wall-clock data supports forward-pass matching as a compute proxy. Measured ratios stay within $9\%$ of the predicted ones: $7.53\times$ against $8.13\times$ for $K{=}8$ over $K{=}1$, and $29.6\times$ against $32.5\times$ for $K{=}32$ over $K{=}1$, with the remaining discrepancy explained by per-segment overhead amortization. LLaDA is about $11\%$ slower at matched passes because of its larger per-pass FLOPs, and its $K{=}8$ over $K{=}1$ ratio stays close to Dream's. ORM scoring costs about $0.3$\,s per candidate against $20.71$\,s of denoising. In wall-clock terms, ORM Rerank@$8$ takes about $168$\,s per problem, slightly more than the $156$\,s of PRM Guided at $K{=}8$, but ORM Rerank@$6$ already reaches $72.40\%$ in about $126$\,s against $65.18\%$ (App.~\ref{app:full_pareto}), so the headline ordering also holds under wall-clock matching.

\section{Additional Matched Controls}
\label{app:additional_matched_controls}

This section reports the controls summarized in the main text. Entries with $\pm$ give sample standard deviations over independent runs; the 95\% CIs of all comparisons below are in App.~\ref{app:ci}.

\subsection{Matched-compute SMC and step-resolved pruning}
\label{app:smc}

The ESS-tempered SMC protocol uses $K{=}8$, $T{=}128$, scoring checkpoints after $64$ and $128$ denoising steps, and $1{,}040$ forward passes. The protocol, tempering grid, resampling rule, and terminal readouts were fixed before test evaluation.

\begin{center}
\small
\begin{tabular}{lccc}
\toprule

\rowcolor{ArxivTableHead}
\tablehead{Quantity} & \tablehead{SMC} & \tablehead{Top-1 guidance pool} & \tablehead{Independent samples} \\
\midrule

\mdseries
Oracle@$8$ (\%) & $77.89\pm0.57$ & $67.30\pm1.24$ & $81.05$ \\
Unique answers per problem & $3.95\pm0.02$ & $1.75$ & $4.31$ \\
Accuracy, weighted answer vote (\%) & $65.48\pm0.12$ & -- & -- \\
Accuracy, top-scoring candidate (\%) & $66.34\pm0.82$ & $65.18\pm0.75$ & -- \\
\bottomrule
\end{tabular}
\end{center}

SMC recovers $10.59$ of the $13.75$ Oracle points lost to top-$1$ pruning and raises unique answers from $1.75$ to $3.95$, yet both of its PRM readouts stay at the top-$1$ level and trail the $75.13\%$ of ORM Rerank@$8$ by $9.65$ and $8.79$\,pp. Top-$2$ retention, at $69.70\%$, trails it by $5.43$\,pp. All three gaps exclude zero (App.~\ref{app:ci}), so the terminal selector is the remaining bottleneck.

The step-resolved analysis is an offline counterfactual over the stored trajectories: at the initial, middle, and final stored states it applies a top-$M$ cut by PRM score and records how often the cut removes every lineage that eventually reaches a correct answer.

\begin{center}
\small
\begin{tabular}{lc}
\toprule

\rowcolor{ArxivTableHead}
\tablehead{Stored state and cut} & \tablehead{Risk of removing every correct lineage (\%)} \\
\midrule

\mdseries
Top-$1$, initial & 46.06 \\
Top-$1$, middle & 37.02 \\
Top-$1$, final & 19.92 \\
Top-$2$, middle & 22.54 \\
Top-$2$, final & 13.01 \\
Top-$4$, middle & 9.94 \\
Top-$4$, final & 5.93 \\
\bottomrule
\end{tabular}
\end{center}

From the middle to the final state, the top-$1$ risk falls by $17.1$\,pp, a drop whose interval excludes zero. The largest removal risk sits at early states, where PRM ROC-AUC is lowest (\Cref{sec:mechanism-decay}), and wider cuts shrink it at both the middle and final states.

\subsection{Final-state specialist control and task-specific MATH and MBPP}
\label{app:final_state}

The final-state PRM is the bidirectional PRM retrained under the ORM protocol of App.~\ref{app:orm_protocol_controls}, on final states from the official GSM8K training split and without a step embedding. It reranks the same candidate pool used for the ORM comparison:

\begin{center}
\small
\begin{tabular}{lcc>{\columncolor{ArxivTableCol}}c}
\toprule

\rowcolor{ArxivTableHead}
\tablehead{Candidate budget} & \tablehead{Cross-mask PRM} & \tablehead{Final-state PRM} & \tablehead{ORM Rerank} \\
\midrule

\mdseries
$N{=}8$ & 42.84 & $75.40\pm0.05$ & 75.13 \\
$N{=}32$ & 65.35 & $82.79\pm0.11$ & 82.71 \\
\bottomrule
\end{tabular}
\end{center}

At both budgets the paired interval between ORM Rerank and the final-state PRM includes zero (App.~\ref{app:ci}): final-state training closes the terminal scoring gap that cross-mask training opens.

For task-specific MATH, the fitting split contains $450$ problems and the validation split $50$; the test set contains all $500$ held-out MATH500 problems. At $N{=}K{=}8$, ORM Rerank reaches $30.65\%$ against $20.80\%$ for both PRM Rerank and deterministic PRM Guided, a $9.85$\,pp lead. On shared stored candidates, ORM Rerank reaches $31.90\%$ against $20.80\%$ for PRM Rerank, an $11.10$\,pp lead. Both intervals exclude zero.

On $257$ held-out MBPP tasks, ORM Rerank reaches $63.04\pm0.84\%$, PRM Rerank $65.47\pm0.74\%$, and PRM Guided $50.88\pm3.57\%$. ORM Rerank leads PRM Guided by $12.16$\,pp, an interval that excludes zero, while its difference from PRM Rerank includes zero: the PRM ranks final programs on par with the ORM, so the guided shortfall on MBPP reflects guidance rather than terminal scoring.

\subsection{Rollout-value supervision and frozen rollout diagnostic}
\label{app:rollout}

We relabeled $10{,}000$ states from official GSM8K training problems, using $8{,}000$ for fitting and a problem-disjoint $2{,}000$ for validation, with eight independent rollouts per state; a fractional label is the share of those rollouts that reach a correct answer. Binary and fractional scorers share the states, validation split, initialization, data order, checkpoint rule, and 500-step optimization budget. Pooled ROC-AUC against rollout outcomes rises from $0.816$ to $0.827$, a paired gain of $0.011$ whose interval excludes zero.

On all $1{,}319$ GSM8K test problems under the matched supervision control, inference accuracy is $61.49\%$ for fractional labels and $60.88\%$ for binary labels, a $0.61$\,pp difference whose interval includes zero.

The frozen diagnostic scores a fixed PRM against fresh rollout outcomes. Its ROC-AUC slope across mask buckets is $-0.326$, and switching the evaluation target from inherited labels to fresh rollouts shifts the slope by only $0.009$, an interval that includes zero: the decay survives the switch even though inherited-label disagreement rises from $0.50\%$ in the lowest mask bucket to $21.00\%$ in the highest. Absolute ROC-AUC values in this subset differ from the main-text $0.77$ to $0.54$ curve because it caps each mask bucket at $200$ states and scores against fresh rollout outcomes.

\section{Mechanism and Scorer Diagnostics}
\label{app:mechanism_diagnostics}

\subsection{Branching-frequency ablation}
\label{app:branching}

Sparse branching outperforms frequent branching on PRM Guided. The table reports one evaluation per interval at $K{=}8$; the run-averaged accuracy at $\BranchEvery{=}64$ is the $65.18\%$ of Table~\ref{tab:pareto}.

\begin{center}
\begin{tabular}{lcccc}
\toprule

\rowcolor{ArxivTableHead}
\tablehead{Step interval} & \tableheadmath{16} & \tableheadmath{32} & \tableheadmath{48} & \tableheadmath{64} \\
\midrule

\mdseries
Accuracy (\%) & $59.4$ & $62.8$ & $65.1$ & \best{66.5} \\
Forward passes per sample & $136K$ & $132K$ & $131K$ & $130K$ \\
\bottomrule
\end{tabular}
\end{center}

Accuracy rises at every step from $\BranchEvery{=}16$, the most frequent guidance, to $\BranchEvery{=}64$, the least frequent, even though frequent branching also spends more PRM calls. Frequent branching invokes the PRM at high mask ratios, where its ROC-AUC is lowest (Figure~\ref{fig:mechanism}(a)), and prunes branches on that weak signal (\Cref{sec:mechanism-diversity}).

\subsection{Confidence intervals for all reported comparisons}
\label{app:ci}

Every 95\% CI in the paper is collected here, so the other sections report point estimates only. The GSM8K headline intervals use a paired bootstrap with $2{,}000$ resamples of the $1{,}319$ test problems; the controls of App.~\ref{app:additional_matched_controls} resample their held-out problems or tasks, with the step-resolved row clustered by problem; the reversed-input row resamples its snapshots $1{,}000$ times. Full code is in the supplementary material. Accuracies are in \%, differences in pp.

\begin{center}
\small
\begin{tabular}{lcccc}
\toprule

\rowcolor{ArxivTableHead}
\tablehead{GSM8K headline budgets} & \tableheadmath{N{=}8} & \tablehead{95\% CI} & \tableheadmath{N{=}32} & \tablehead{95\% CI} \\
\midrule

\mdseries
Majority & $60.05$ & $[57.47, 62.62]$ & $67.63$ & $[65.20, 70.20]$ \\
\rowcolor{ArxivTableRow}
ORM Rerank & $75.13$ & $[72.86, 77.41]$ & $82.71$ & $[80.82, 84.61]$ \\
Oracle & $81.05$ & $[79.08, 83.24]$ & $91.13$ & $[89.61, 92.72]$ \\
Cross-mask PRM Rerank & $42.84$ & $[40.3, 45.6]$ & $65.35$ & $[62.8, 68.0]$ \\
\midrule

\mdseries
ORM Rerank $-$ Majority & $+15.09$ & $[+13.12, +17.44]$ & $+15.09$ & $[+12.96, +17.44]$ \\
Oracle $-$ Majority & $+21.00$ & $[+18.95, +23.28]$ & $+23.50$ & $[+21.15, +25.93]$ \\
Oracle $-$ ORM Rerank & $+5.91$ & $[+4.62, +7.13]$ & $+8.42$ & $[+7.05, +10.01]$ \\
\rowcolor{ArxivTableRow}
ORM Rerank $-$ PRM Guided, one run & $+8.79$ & $[+6.44, +11.14]$ & $+11.98$ & $[+9.78, +14.25]$ \\
ORM Rerank $-$ final-state PRM & -- & $[-0.38, +0.12]$ & -- & $[-0.31, +0.16]$ \\
\bottomrule
\end{tabular}
\end{center}

\begin{center}
\small
\begin{tabular}{lccl}
\toprule

\rowcolor{ArxivTableHead}
\tablehead{Other comparisons} & \tablehead{Estimate} & \tablehead{95\% CI} & \tablehead{Details} \\
\midrule

\mdseries
Vanilla GSM8K accuracy, $N{=}1$ & $43.14$ & $[40.6, 45.9]$ & App.~\ref{app:full_pareto} \\
ORM Rerank@$8$ $-$ SMC, weighted vote & $+9.65$ & $[7.71, 11.68]$ & App.~\ref{app:smc} \\
ORM Rerank@$8$ $-$ SMC, top particle & $+8.79$ & $[6.87, 10.74]$ & App.~\ref{app:smc} \\
ORM Rerank@$8$ $-$ top-$2$ retention & $+5.43$ & $[3.98, 6.87]$ & App.~\ref{app:topm} \\
Top-$1$ removal risk, middle $-$ final state & $+17.1$ & $[15.50, 18.72]$ & App.~\ref{app:smc} \\
MATH, ORM Rerank $-$ PRM Guided & $+9.85$ & $[7.20, 12.45]$ & App.~\ref{app:final_state} \\
MATH shared candidates, ORM Rerank $-$ PRM Rerank & $+11.10$ & $[8.75, 13.50]$ & App.~\ref{app:final_state} \\
MBPP, ORM Rerank $-$ PRM Guided & $+12.16$ & $[8.56, 15.86]$ & App.~\ref{app:final_state} \\
MBPP, ORM Rerank $-$ PRM Rerank & $-2.43$ & $[-5.54, 0.68]$ & App.~\ref{app:final_state} \\
Fractional $-$ binary labels, pooled ROC-AUC & $+0.0109$ & $[0.0031, 0.0185]$ & App.~\ref{app:rollout} \\
Fractional $-$ binary labels, GSM8K accuracy & $+0.61$ & $[-0.30, 1.52]$ & App.~\ref{app:rollout} \\
Frozen diagnostic, ROC-AUC slope across mask buckets & $-0.3260$ & $[-0.3843, -0.2670]$ & App.~\ref{app:rollout} \\
Slope shift, fresh rollouts $-$ inherited labels & $+0.0087$ & $[-0.0585, 0.0782]$ & App.~\ref{app:rollout} \\
Bidirectional PRM ROC-AUC, mask bucket $[0.0, 0.1)$ & $0.770$ & $[0.767, 0.774]$ & \Cref{sec:mechanism-decay} \\
Suffix ablation, change in ROC-AUC & $+0.001$ & $[-0.001, +0.004]$ & App.~\ref{app:m4m5_figure} \\
Reversed-input causal ROC-AUC & $0.500$ & $[0.498, 0.502]$ & App.~\ref{app:m4m5_figure} \\
LLaDA, bidirectional $-$ causal PRM guidance & $+9.39$ & $[8.5, 10.3]$ & App.~\ref{app:llada_prelim} \\
\bottomrule
\end{tabular}
\end{center}

Intervals that include zero belong to the controls where the paper claims parity or no effect: ORM Rerank against the final-state PRM, ORM Rerank against PRM Rerank on MBPP, fractional against binary labels on accuracy, the slope shift, and the suffix ablation; the reversed-input interval contains chance. Every other difference excludes zero. The single-run gaps of $8.79$ and $11.98$\,pp pair ORM Rerank with one PRM Guided run; Table~\ref{tab:pareto} reports the run-averaged $9.95$ and $12.69$\,pp. The final-state PRM row reports intervals only.

\subsection{Last-token pooling ablation (causal PRM)}
\label{app:last_token}

\paragraph{Motivation.}
Our main-text causal PRM (\Cref{sec:mechanism-causal}) uses mask-aware mean pooling to produce a scalar reward.
Causal reward models~\citep{ouyang2022instructgpt,stiennon2020summarize} read out the \emph{final-token} hidden state, since only the last position in a causal sequence attends to every earlier token.
The ablation asks whether the within-protocol bidirectional-vs-causal gap in \Cref{sec:mechanism-causal} reflects causal attention itself or the mean-pooling readout.

\paragraph{Setup.}
We retrain the causal PRM under an identical protocol to \Cref{sec:setup}, changing only the pooling strategy from mask-aware mean over the solution region to the hidden state of the \emph{last non-\texttt{MASK} non-\texttt{EOS} position}.
All other hyperparameters match App.~\ref{app:prm_training}.

\paragraph{Result.}
We score all $\NTestTrajectories$ held-out final-state trajectories with the retrained model.

\begin{center}
\small
\begin{tabular}{lc>{\columncolor{ArxivTableCol}}c}
\toprule

\rowcolor{ArxivTableHead}
\tablehead{Scorer (all at mask=0)} & \tablehead{ROC-AUC} & \tablehead{Delta from mean-pool causal} \\
\midrule

\mdseries
Causal PRM, mean pooling (original) & $0.6147$ & $0$ \\
Causal PRM, last-token pooling & \best{0.7274} & $+0.1127$ \\
Bidirectional PRM, mean pooling & $0.7759$ & $+0.1612$ \\
\midrule

\mdseries
Bidirectional PRM $-$ causal last-token PRM & \best{0.0485} & -- \\
\bottomrule
\end{tabular}
\end{center}

\paragraph{Interpretation.}
Last-token pooling improves the causal PRM at mask${=}0$ by $0.113$ ROC-AUC, closing about $70\%$ of the original $0.16$ mean-pool gap to the bidirectional PRM.
Most of the original gap came from choosing mean pooling over the last-token readout standard in causal reward models; a residual gap of about $0.05$ ROC-AUC persists with matched readouts.

\paragraph{Downstream reranking.}
\label{app:lasttoken_pareto}
Using the last-token causal PRM to rerank all $\NTestTrajectories$ trajectories removes the non-monotonic pathology of the mean-pool variant:

\begin{center}
\small
\begin{tabular}{lcccccc}
\toprule

\rowcolor{ArxivTableHead}
\tableheadmath{N} & \tableheadmath{1} & \tableheadmath{2} & \tableheadmath{4} & \tableheadmath{8} & \tableheadmath{16} & \tableheadmath{32} \\
\midrule

\mdseries
Causal PRM Rerank, mean-pool & $43.14$ & $42.68$ & $43.21$ & $43.90$ & $42.15$ & $40.56$ \\
Causal PRM Rerank, last-token & $43.14$ & $45.34$ & $48.29$ & $49.66$ & $49.36$ & \best{50.64} \\
$\Delta$ & $0.00$ & $+2.66$ & $+5.08$ & $+5.76$ & $+7.21$ & $+10.08$ \\
\bottomrule
\end{tabular}
\end{center}

The last-token variant rises with $N$ apart from a $0.30$\,pp dip at $N{=}16$ and reaches $50.64\%$ at $N{=}32$, $7.50$\,pp above $N{=}1$. The mean-pool variant declines at every step from $N{=}8$ onward, falls below the single-sample $43.14\%$ at $N{=}2$, $16$, and $32$, and ends at $40.56\%$, $2.58$\,pp below $N{=}1$ and below random selection at the same budget (App.~\ref{app:sanity}), so pooled ROC-AUC of $0.61$ does not carry over to within-problem ranking (Proposition~\ref{prop:pooled_main}). Once the readout is fixed and final-state ROC-AUC rises from $0.61$ to $0.73$, reranking improves with $N$ again.

\paragraph{Downstream reranking interpretation.}
The rerank collapse of the mean-pooled causal PRM is a readout artifact rather than a property of causal attention: with a last-token readout, the same backbone closes most of the final-state ROC-AUC gap and recovers the upward trend with $N$. Readout does not remove scorer-level differences. At $N{=}32$, last-token causal reranking reaches $50.64\%$ against $82.71\%$ for ORM Rerank, a $32.07$\,pp gap, and the bidirectional cross-mask PRM trails ORM Rerank by $32.3$\,pp at $N{=}8$ (\Cref{sec:empirical}).

\paragraph{Training-length controls.}
The main comparison uses $2{,}000$ training steps at batch $32$. Longer retrains of $15{,}000$ and $31{,}000$ steps (App.~\ref{app:one_epoch}) separate architectural effects from early-training convergence speed: the bidirectional-over-causal accuracy gap in the lowest mask bucket narrows from $13.6$\,pp at $15$K steps to $9.5$\,pp at $31$K, and a $7.8$\,pp gap remains under ORM-protocol training (App.~\ref{app:orm_protocol_controls}). Undertraining alone does not explain the architectural gap.

\subsection{PRM-as-reranker validation}
\label{app:sanity}

We validate the PRM-as-reranker result in \Cref{sec:empirical} with four checks on the same $\NTestTrajectories$ scored trajectories.

\paragraph{Sign-convention check.}
At mask${=}0$, the bidirectional PRM gives the $18{,}277$ correct trajectories a mean score of $-0.921$ and the $23{,}931$ incorrect ones $-3.675$, a difference of $2.75$: higher scores mark correct trajectories, so the sign convention is correct.

\paragraph{Rank correlation with correctness.}
Across the $1{,}182$ problems whose candidate pool contains both correct and incorrect trajectories, Kendall $\tau$ between PRM score and correctness has mean $0.397$ and median $0.441$, and the point-biserial correlation has mean $0.299$. The per-problem score separation, the mean score on correct candidates minus that on incorrect ones, is positive for $88\%$ of these problems, with mean $1.84$, median $1.85$, and a 10th to 90th percentile range from $-0.21$ to $3.78$. The ranking signal is consistently positive but moderate in magnitude.

\paragraph{Random Rerank comparator at matched compute.}
We compare PRM Rerank with Random Rerank, which picks uniformly among the $N$ candidates; Random Rerank entries give the mean $\pm$ std over $10$ trials.

\begin{center}
\small
\begin{tabular}{lcccc}
\toprule

\tableheadmath{N} & \tableheadmath{4} & \tableheadmath{8} & \tableheadmath{16} & \tableheadmath{32} \\
\midrule

\rowcolor{ArxivTableRow}
\mdseries
Random Rerank & $43.65 \pm 0.97$ & $43.24 \pm 1.26$ & $43.87 \pm 0.75$ & $43.28 \pm 1.16$ \\
Cross-mask PRM Rerank & $44.88$ & $42.84$ & $52.46$ & $65.35$ \\
PRM Rerank $-$ Random Rerank & $+1.23$ & $-0.41$ & $+8.60$ & $+22.07$ \\
\bottomrule
\end{tabular}
\end{center}

At $N{=}8$, PRM Rerank behaves like near-uniform selection: its $42.84\%$, slightly below the single-sample $43.14\%$, lies well within one standard deviation of Random Rerank, so the shortfall reflects sampling variation rather than a systematic preference for wrong candidates.
At $N{=}16$ and $N{=}32$, PRM Rerank exceeds Random Rerank by $8.6$ and $22.1$\,pp, so the scorer carries real information.
\label{app:scorer_breakdown}It still trails ORM Rerank by $32.3$\,pp at $N{=}8$ and $17.4$\,pp at $N{=}32$. This is the specialization cost of cross-mask training, which spreads the scorer across all mask ratios: the same architecture retrained on final states matches the ORM (App.~\ref{app:final_state}).

\paragraph{ROC-AUC by slice: reconciliation.}
The main text cites ROC-AUC values from several data slices; we list each slice explicitly:

\begin{center}
\small
\resizebox{\linewidth}{!}{%
\begin{tabular}{lcc}
\toprule

\rowcolor{ArxivTableHead}
\tablehead{Scorer} & \tablehead{Data slice} & \tablehead{ROC-AUC} \\
\midrule

\mdseries
Bidirectional PRM (mean-pool) & snapshot mask bucket $[0.0, 0.1)$, $n{=}126{,}624$ & $0.7702$ \\
Bidirectional PRM (mean-pool) & mask${=}0$ final states, $n{=}42{,}208$ & $0.7759$ \\
Causal PRM (mean-pool) & mask${=}0$ final states, $n{=}42{,}208$ & $0.6147$ \\
Causal PRM (last-token) & mask${=}0$ final states, $n{=}42{,}208$ & $0.7274$ \\
Bidirectional ORM (final-state-only training, \emph{reference only}) & mask${=}0$ final states, $n{=}42{,}208$ & $0.9623$ \\
\bottomrule
\end{tabular}}
\end{center}

The bidirectional ORM's $0.9623$ shows how well a \emph{scorer specialized for final states} performs on the same data; the architectural comparison against causal PRMs in \Cref{sec:mechanism-causal} uses matched PRM training. With the same training and reward head, the bidirectional PRM reaches $0.7759$ at mask${=}0$, against $0.6147$ for the mean-pooled causal PRM, which differs only in the attention mask, and $0.7274$ for the last-token causal PRM.

\subsection{Controlled ORM-protocol retrains}
\label{app:orm_protocol_controls}

The main comparison in Section~\ref{sec:mechanism-causal} holds the training protocol fixed, with training across mask ratios, a step embedding, and mean pooling, and varies only the attention mask. The ORM-protocol control tests whether the bidirectional advantage at mask${=}0$, $0.16$ ROC-AUC within protocol, persists under the ORM protocol, which trains on final states only and drops the step embedding.

The control uses two independent fits for each of three configurations: causal with last-token pooling, bidirectional with last-token pooling, and bidirectional with mean pooling. All are trained for $8{,}407$ steps on final states from the training problems. Together they separate the attention mask, the readout, and the training distribution.

\paragraph{Final-state classification accuracy at mask${=}0$.}
For these retrains we report held-out \emph{classification accuracy} at threshold $0.5$ rather than ROC-AUC. Accuracy and ROC-AUC are directionally consistent on this data.

\begin{center}
\small
\begin{tabular}{lcc}
\toprule
\rowcolor{ArxivTableHead}
\tablehead{Configuration (ORM protocol)} & \tablehead{Accuracy (\%)} & \tablehead{Sample std (pp)} \\
\midrule

\mdseries
Causal, last-token & $84.05$ & $0.15$ \\
Bidirectional, last-token & \best{91.83} & $0.08$ \\
Bidirectional, mean-pool & $91.78$ & $0.23$ \\
\bottomrule
\end{tabular}
\end{center}

\paragraph{Residual bidirectional-over-causal gap.}
With matched last-token readouts, the bidirectional scorer leads the causal one by $91.83 - 84.05 = \mathbf{7.78}$\,pp, with paired standard error below $0.2$\,pp. The gap has the same sign as under the PRM protocol and is close to its $9.5$\,pp value at $31$K steps (App.~\ref{app:one_epoch}), so cross-mask training does not account for the whole causal deficit. For the bidirectional scorer, readout barely matters: last-token and mean pooling reach $91.83\%$ and $91.78\%$.

\subsection{Longer-training convergence analysis}
\label{app:one_epoch}

The main PRMs train for $2{,}000$ steps. To test whether the bidirectional-over-causal gap is an undertraining artifact, we retrained the mean-pooled bidirectional PRM and the last-token causal PRM under an identical protocol for $15{,}000$ steps and then for $31{,}000$ steps, about one full pass over the training corpus.

\paragraph{Per-bucket accuracy at 31K steps.}

\begin{center}
\small
\begin{tabular}{lcc>{\columncolor{ArxivTableCol}}c}
\toprule

\rowcolor{ArxivTableHead}
\tablehead{Mask ratio bucket} & \tablehead{Bidirectional, mean-pool (\%)} & \tablehead{Causal, last-token (\%)} & \tablehead{Gap (pp)} \\
\midrule

\mdseries
$[0.0, 0.1)$ & $90.12$ & $80.66$ & $+9.46$ \\
$[0.1, 0.2)$ & $87.87$ & $79.08$ & $+8.79$ \\
$[0.2, 0.3)$ & $82.12$ & $71.51$ & $+10.61$ \\
$[0.3, 0.4)$ & $75.96$ & $66.12$ & $+9.84$ \\
$[0.4, 0.5)$ & $78.05$ & $70.12$ & $+7.93$ \\
$[0.5, 0.6)$ & $81.82$ & $68.18$ & $+13.64$ \\
$[0.6, 0.7)$ & $72.08$ & $59.74$ & $+12.34$ \\
$[0.7, 0.8)$ & $70.30$ & $69.09$ & $+1.21$ \\
$[0.8, 0.9)$ & $68.82$ & $59.41$ & $+9.41$ \\
$[0.9, 1.0]$ & $57.80$ & $56.27$ & $+1.53$ \\
\bottomrule
\end{tabular}
\end{center}

\paragraph{Gap evolution from 15K to 31K steps.}
In the lowest mask bucket $[0.0, 0.1)$, the bidirectional-over-causal gap evolves as follows:

\begin{center}
\small
\begin{tabular}{lcc>{\columncolor{ArxivTableCol}}c}
\toprule

\tablehead{Training steps} & \tablehead{Bidirectional (\%)} & \tablehead{Causal, last-token (\%)} & \tablehead{Gap (pp)} \\
\midrule

\mdseries
$15$K & $91.77$ & $78.19$ & $13.58$ \\
$31$K & $90.12$ & $80.66$ & \best{9.46} \\
\midrule

\mdseries
Change, $31$K $-$ $15$K & $-1.65$ & $+2.47$ & \best{\textminus4.12, \textminus30\%} \\
\bottomrule
\end{tabular}
\end{center}

Training loss decreases for both architectures through $31$K steps. In this bucket the bidirectional accuracy stays nearly flat, a $1.65$\,pp drop consistent with a plateau, while the causal accuracy improves by $2.47$\,pp, so causal training is still converging at $15$K.

\paragraph{Residual architectural effect persists.}
At $31$K steps the gap in the lowest mask bucket narrows by $30\%$, from $13.6$ to $9.5$\,pp, so part of the $15$K-step gap is undertraining. The remaining $9.5$\,pp persists after doubling the training budget, the bidirectional PRM stays ahead in all ten mask buckets, and the ORM-protocol control shows a similar $7.8$\,pp gap under a different training recipe (App.~\ref{app:orm_protocol_controls}). Together they point to a real architectural effect of bidirectional attention that longer training narrows but does not close.

\section{Additional Experimental Results}
\label{app:additional_results}

\subsection{Full compute-matched Pareto table}
\label{app:full_pareto}

Table~\ref{tab:pareto} in the main text reports the headline budgets $N{\in}\{8, 32\}$. The full sweep below covers every tested budget on the $1{,}319$ GSM8K test problems with Dream-v0-Instruct-7B; PRM Guided uses $K{=}N$ and $\BranchEvery{=}64$ and reports mean $\pm$ sample std over three runs for $K\le 24$ and two for $K{=}32$.

\begin{center}
\small
\begin{tabular}{r>{\columncolor{ArxivTableCol}}c cccc}
\toprule

\rowcolor{ArxivTableHead}
\tableheadmath{N} & \tablehead{Majority} & \tablehead{ORM Rerank} & \tablehead{Weighted-Majority} & \tablehead{Oracle} & \tablehead{PRM Guided} \\
\midrule

\mdseries
$1$  & $43.14$ & $43.14$ & $43.14$ & $43.14$ & $41.60 \pm 1.31$ \\
$2$  & $43.14$ & $55.88$ & $55.88$ & $57.77$ & $51.40 \pm 1.33$ \\
$4$  & $52.99$ & $66.79$ & $66.94$ & $70.96$ & $59.21 \pm 0.93$ \\
$6$  & $57.24$ & $72.40$ & $72.71$ & $77.41$ & $66.21 \pm 0.76$ \\
$8$  & $60.05$ & \best{75.13} & $75.36$ & $81.05$ & $65.18 \pm 0.75$ \\
$12$ & $63.46$ & $77.79$ & $77.94$ & $84.31$ & $69.42 \pm 0.64$ \\
$16$ & $65.58$ & $79.68$ & $79.68$ & $86.28$ & $67.32 \pm 0.59$ \\
$24$ & $67.25$ & $80.89$ & $80.97$ & $89.01$ & $73.67 \pm 0.82$ \\
$32$ & $67.63$ & \best{82.71} & $82.71$ & $91.13$ & $70.02 \pm 0.70$ \\
\bottomrule
\end{tabular}
\end{center}

The sweep supports three conclusions. First, ORM Rerank grows monotonically from $43.14\%$ at $N{=}1$ to $82.71\%$ at $N{=}32$, while Oracle at $91.13\%$ leaves $8.4$\,pp of headroom that neither ORM Rerank nor PRM Guided exploits. Second, verifier-weighted voting, the Weighted-Majority column~\citep{li2023makinglargelanguagemodels}, stays within $0.4$\,pp of ORM Rerank at every budget, so the gain does not hinge on the argmax selection rule. Third, the gap between ORM Rerank and PRM Guided grows from $9.95$\,pp at $N{=}8$ to $12.69$\,pp at $N{=}32$. Majority, ORM Rerank, Weighted-Majority, and Oracle are computed once from the pooled $32$-sample candidate set per problem, so they carry no across-run standard deviation; their CIs are in App.~\ref{app:ci}.

\paragraph{Note on PRM Guided at $K{=}1$.}
At $K{=}1$, PRM Guided reaches $41.60 \pm 1.31\%$, about one standard deviation below the $43.14\%$ of Vanilla. The two are algorithmically equivalent, since $K{=}1$ removes any selection, but segmental decoding re-enters per-token sampling at each segment boundary and so perturbs the random stream relative to single-pass Vanilla decoding. The $K{=}1$ entry is reported for completeness and is not part of the headline comparison at $K{=}8$ and $K{=}32$.

\subsection{PRM Hybrid Oracle ceiling}
\label{app:hybrid_orm_ceiling}

The headline ORM Rerank vs PRM Guided comparison mixes two effects: the PRM signal may be weak, and top-$1$ pruning may use that signal inefficiently. PRM Hybrid separates them. It is identical to PRM Guided except that all $K{=}8$ candidates survive the final segment, and the Oracle ceiling of that pool measures what guidance leaves for any selector. PRM Hybrid spends $1{,}040$ passes, within $0.8\%$ of ORM Rerank@$8$, and its entry is the mean $\pm$ std over three runs.

\begin{center}
\small
\begin{tabular}{lc}
\toprule

\rowcolor{ArxivTableHead}
\tablehead{Candidate pool and selector} & \tablehead{GSM8K accuracy (\%)} \\
\midrule

\mdseries
PRM Hybrid pool, Oracle@$8$ & $67.30 \pm 1.24$ \\
\midrule

\mdseries
Independent samples, ORM Rerank@$8$ & $75.13$ \\
Independent samples, Oracle@$8$ & $81.05$ \\
\bottomrule
\end{tabular}
\end{center}

\paragraph{Perfect-selector ceiling.}
Even a perfect selector over the PRM Hybrid pool reaches only $67.30\%$, $7.83$\,pp below ORM Rerank@$8$ and $13.75$\,pp below Oracle@$8$ on independent samples (Section~\ref{sec:mechanism-diversity}), so part of the PRM Guided shortfall sits upstream in the candidate pool, before any selection rule applies.

\subsection{MATH500 out-of-distribution evaluation}
\label{app:math500_full}

MATH500~\citep{hendrycks2021math} provides an out-of-distribution evaluation with $500$ problems for the GSM8K-trained scorers. GSM8K uses short integer answers with a clear extraction rule, which makes final-answer verification comparatively direct; the task-specific MATH and MBPP controls test the same ordering under different answer formats. We use the LaTeX-robust \texttt{math-verify} grader because the GSM8K-style regex extractor is unreliable on MATH500's LaTeX answer format.

\paragraph{Sampler setting.}
The independent-sample methods on MATH500 use temperature $1.0$ and $\texttt{alg\_temp}{=}0$, with $32$ trajectories per problem in each of two runs; PRM Guided inherits the GSM8K sampler, with temperature $0.5$ and $\texttt{alg\_temp}{=}0.5$, and its run counts are given below the table. Because the samplers differ, this table tests whether the GSM8K-trained ORM transfers, not the PRM-vs-ORM ordering. The sampler-controlled comparison is the task-specific MATH control in App.~\ref{app:additional_matched_controls}, where a MATH-trained ORM beats PRM Guided by $9.85$\,pp on the same $500$ test problems.

The bidirectional Dream ORM trained on GSM8K is applied to MATH500 final states:

\begin{center}
\small
\begin{tabular}{lrrrrrr}
\toprule

\rowcolor{ArxivTableHead}
\tablehead{Method} & \tableheadmath{N{=}1} & \tableheadmath{N{=}2} & \tableheadmath{N{=}4} & \tableheadmath{N{=}8} & \tableheadmath{N{=}16} & \tableheadmath{N{=}32} \\
\midrule

\mdseries
Vanilla, first trajectory & $6.90$ & \multicolumn{5}{c}{same at every $N$} \\
Majority              & $6.90^\dagger$ & $7.10$ & $8.70$ & $13.20$ & $15.40$ & \best{17.20} \\
ORM Rerank            & $6.90^\dagger$ & $6.60$ & $7.40$ & $6.70$ & $7.50$ & $6.10$ \\
PRM Guided, $K{=}N$ & $11.25$ & $13.45$ & $14.00$ & $13.72$ & $14.35$ & $15.00$ \\
Oracle & $6.90^\dagger$ & $12.00$ & $21.90$ & $31.10$ & $42.70$ & $54.10$ \\
\bottomrule
\end{tabular}
\end{center}

\paragraph{Out-of-distribution behavior.}
The GSM8K-trained ORM does not transfer to MATH500: ORM Rerank stays near $7\%$ across all $N$, below both Majority@$32$ ($17.20\%$) and PRM Guided with $K{=}32$ ($15.00\%$), even though Oracle@$32$ reaches $54.10\%$. The ORM Rerank advantage requires a verifier trained for the task, and a MATH-trained ORM restores the ordering (App.~\ref{app:additional_matched_controls}).

\noindent{\footnotesize $^\dagger$~At $N{=}1$, Majority, ORM Rerank, and Oracle reduce by definition to the Vanilla single-trajectory accuracy ($6.90\%$). Entries are means over independent runs, two for Majority, ORM Rerank, and Oracle, and for PRM Guided four at $K\in\{1,2,4,16\}$, five at $K{=}8$, and two at $K{=}32$; PRM Guided uses $\BranchEvery{=}64$, and no sample standard deviation exceeds $1.84$\,pp. Majority@$N$ for $N\in\{2,4,8,16,32\}$ is computed on the $32$ Vanilla trajectories using \texttt{math\_verify} answer clustering, that is, majority voting on parsed answers.}\normalsize

\subsection{\texorpdfstring{Top-$M$ retention ablation: relaxing top-$1$ pruning}{Top-M retention ablation: relaxing top-1 pruning}}
\label{app:topm}

Deterministic top-$1$ pruning may be too aggressive. We test this with the top-$M$ retention generalization of Algorithm~1: at each segment, keep the top-$M$ candidates by PRM score instead of top-$1$; each retained state spawns $K/M$ children at the next segment, preserving total branch width $K$.

We evaluate $K{=}8$, $\BranchEvery{=}64$ with $M \in \{1, 2\}$ on the $1{,}319$ GSM8K test problems; $M{=}1$ is PRM Guided. Each evaluation uses about $1{,}040$ forward passes per sample, and entries are the mean $\pm$ std over three runs.

\begin{center}
\small
\begin{tabular}{lrc}
\toprule

\rowcolor{ArxivTableHead}
\tablehead{Method} & \tableheadmath{M} & \tablehead{GSM8K accuracy (\%)} \\
\midrule

\mdseries
PRM Guided, top-$1$ pruning & 1 & $65.18 \pm 0.75$ \\
PRM Guided, top-$M$ retention  & 2 & \best{69.70 \textpm{} 1.42} \\
ORM Rerank@$8$, independent samples & -- & $75.13$ \\
\bottomrule
\end{tabular}
\end{center}

Top-$M{=}2$ improves accuracy over $M{=}1$ by $4.52$\,pp, from $65.18\%$ to $69.70\%$, so relaxed pruning helps. It still trails ORM Rerank@$8$ by $5.43$\,pp at the same budget, matching the main result that independent sampling plus a final-state ORM is stronger on Dream-7B GSM8K.

Top-$M$ retention reduces the damage of top-$1$ pruning, and the matched SMC control in App.~\ref{app:additional_matched_controls} restores most of the pool while its PRM-selected accuracy stays at the top-$1$ level. Both point to pairing diversity-preserving search with a final-state verifier as the direct next step.

\subsection{Cross-backbone results on LLaDA-8B-Base (full evaluation)}
\label{app:llada_prelim}

Figure~\ref{fig:llada_prelim} in \Cref{sec:mechanism-causal} summarizes the full LLaDA evaluation grid on the complete GSM8K test set. For each scorer, the evaluated configurations across the four $\BranchEvery$ values, including $48$, are pooled into $8$ cells, with $K{=}8$ throughout and $1{,}319$ problems per cell; the bidirectional-over-causal conclusion is unchanged across pooling choices.

\paragraph{Summary statistics.}

\begin{center}
\small
\begin{tabular}{lccc>{\columncolor{ArxivTableCol}}c}
\toprule

\rowcolor{ArxivTableHead}
\tablehead{Method} & \tablehead{Accuracy (\%)} & \tablehead{Sample std} & \tablehead{Pooled cells} & \tablehead{Delta vs Vanilla (pp)} \\
\midrule

\mdseries
Vanilla, $K{=}1$ & $20.77$ & -- & 1 & $0$ \\
Causal PRM Guided, $K{=}8$  & $22.25$ & $0.90$ & 8 & $+1.48$ \\
Bidirectional PRM Guided, $K{=}8$   & \best{31.64} & $0.75$ & 8 & \best{+10.87} \\
\bottomrule
\end{tabular}
\end{center}

Across all tested LLaDA configurations, bidirectional PRMs consistently outperform causal PRMs, by $9.39$\,pp on average, a gap whose interval excludes zero (App.~\ref{app:ci}). The gap has the same sign as the bidirectional advantage on Dream-7B (\Cref{sec:mechanism-causal}), reproducing the effect on a second, independently released dLLM backbone.

\section{Residual Mechanisms and Scope}
\label{app:residual_mechanisms}

\subsection{Full limitations discussion}
\label{app:limitations_full}

The main body's \textbf{Threats to validity} paragraph (Section~\ref{sec:discussion}) summarizes the following list:

(1)~\emph{Domain scope.} Our evidence covers math reasoning (GSM8K, MATH500) and code generation (MBPP); whether the same failure mode holds for creative generation or open-ended text is untested. We conjecture the failure is less severe when partial states are less reasoning-critical.

(2)~\emph{Cross-backbone scope.} LLaDA-8B-Base reproduces the bidirectional advantage across all $16$ pooled evaluation cells on the full $1{,}319$-problem GSM8K test set (App.~\ref{app:llada_prelim}): bidirectional PRM guidance reaches $31.64\%$ against $22.25\%$ for causal PRM guidance and $20.77\%$ for Vanilla. An LLaDA-specific ORM comparison is not yet run.

(3)~\emph{Supervision.} Our PRMs are trained with binary final-correctness labels only; step-level rationale supervision~\citep{lightman2023letsverify} could alter the signal-decay curve, though it remains unclear whether it would fix causal attention's weakness on fully decoded final states.

(4)~\emph{Readout dependence.} The main causal-vs-bidirectional comparison (Section~\ref{sec:mechanism-causal}) holds mask-aware mean pooling fixed. The last-token ablation in the same section closes about $70\%$ of the final-state ROC-AUC gap; appended-\texttt{[CLS]} and learned query pooling remain untested.

(5)~\emph{Guidance-algorithm scope.} We evaluate one family of PRM Guided algorithms, segmental top-$1$ pruning with a fixed branching interval, together with matched ESS-tempered SMC and top-$M$ controls. Adaptive branching schedules aligned with denoising stages where the PRM is most competent, lookahead-based scoring, SMC variants with explicit diversity kernels, stochastic beam search, and hybrid PRM and ORM voting remain untested and are natural follow-ups to our diversity-collapse finding.

(6)~\emph{Sampler sensitivity.} Snapshot distributions depend on the dLLM's sampling hyperparameters (temperature, top-$p$, noise schedule). We use the Dream-7B default schedule; whether the ROC-AUC decay shape is sampler-invariant is an open empirical question.

(7)~\emph{ROC-AUC is not guidance utility.} Snapshot-level ROC-AUC measures discrimination but not the utility of the discrimination for search. A scorer with moderate ROC-AUC but well-calibrated uncertainty could in principle outperform a higher-AUC but overconfident scorer; our diversity-collapse finding (Section~\ref{sec:mechanism-diversity}) points to this gap.

(8)~\emph{Falsification cost.} Testing hypothesis (H1) of App.~\ref{app:hypothesis_family} requires pretraining a new $\sim\!7$B permutation-LM scorer from scratch and is beyond the scope of this diagnostic paper.

\subsection{Side-of-context rule-out controls}
\label{app:m4m5_figure}

Figure~\ref{fig:m4m5} shows the two controls summarized in Section~\ref{sec:mechanism-ruleout}.

\begin{figure}[ht]
    \centering
    \includegraphics[width=0.88\textwidth]{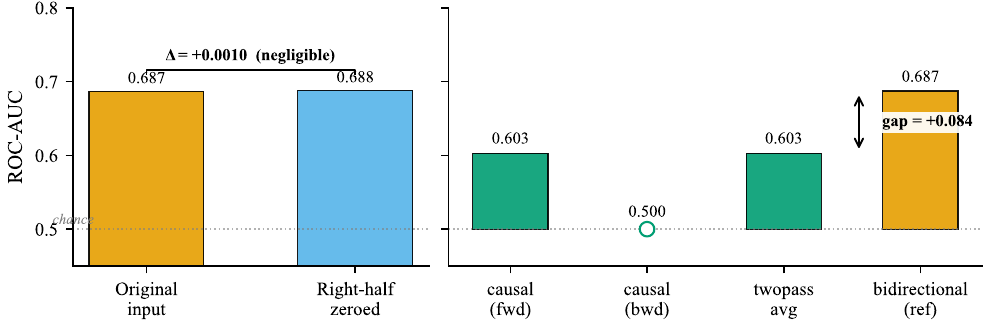}
    \caption{\textbf{Two rule-out controls on the bidirectional-over-causal advantage.} \textbf{Left: suffix ablation.} Zeroing right-half generated tokens changes the bidirectional PRM's ROC-AUC by about $+0.001$, so the advantage does not rest on right-half suffix tokens. \textbf{Right: reversed-input causal scoring.} The causal PRM scores reversed input at chance (ROC-AUC $\approx 0.500$), as RoPE pretraining predicts (App.~\ref{app:m5_caveats}), so averaging it with the forward pass recovers only the forward ROC-AUC and leaves a $0.084$ gap to the bidirectional reference. Both panels share the ROC-AUC axis, with chance at $0.5$.}
    \label{fig:m4m5}
\end{figure}

\textbf{Suffix ablation (left panel).} Zeroing the right-half generated tokens of each scored snapshot changes the bidirectional PRM's ROC-AUC by only about $+0.001$, from $0.687$ to $0.688$ over $\NSnapshots$ snapshots, so the bidirectional advantage does not come from reading right-half suffix tokens. The ablation targets that simplest form of right context; finer uses of side-of-context information remain possible.

\textbf{Reversed-input causal scoring (right panel).} Scoring with the causal PRM forward (L$\to$R) and on reversed input (R$\to$L), then averaging, does not recover the bidirectional ROC-AUC. The reversed pass scores ROC-AUC $0.500$, at chance (App.~\ref{app:ci}), so the two-pass average matches the forward pass alone. This chance-level result is what the RoPE-based Dream backbone, which never saw reversed text in training, predicts (App.~\ref{app:m5_caveats}), which makes the control a test of inference-time fixes on standard backbones rather than a behavioral rule-out of side-of-context information.

Together, the two controls rule out the simplest explanation and the simplest inference-time fix for the bidirectional-over-causal gap; App.~\ref{app:hypothesis_family} lists the hypotheses that remain.

\subsection{Reversed-input control and RoPE}
\label{app:m5_caveats}

The reversed-input experiment scores the causal PRM on reversed input sequences and yields ROC-AUC $0.500$. This outcome follows from the positional encoding, so the experiment constrains \emph{inference-time fixes} rather than ruling out side-of-context information behaviorally.

Dream-7B's backbone uses Rotary Position Embedding (RoPE, \citealp{su2024roformer}). RoPE injects position into self-attention through rotations of the query and key vectors: for position $p$ and frequency $\theta_k$, the vector at position $p$ is rotated by an angle $p \theta_k$, so each query-key inner product depends on the relative offset between the two positions. The backbone was trained on L$\to$R inputs, where these offsets encode \emph{ordered} neighbor relationships.

Reversing the input sequence places $t_{L-p-1}$ at position $p$ while RoPE keeps the $p$-indexed rotation, which flips the sign of every relative offset between tokens: a token's former right neighbors now sit to its left, and under the causal mask it attends only to them. The backbone never saw text in this order, and the causal PRM was trained only on forward inputs, so the scalar its reward head reads out carries no signal about correctness and ROC-AUC falls to the null $0.500$.

This control does not test whether a causal PRM benefits from right-to-left evidence in principle; it tests whether a causal PRM with pretraining-locked positional encoding can score reversed sequences zero-shot. In this setup, it cannot. A decisive rule-out of side-of-context information would require (i)~retraining the backbone on bidirectional input distributions, which amounts to bidirectional pretraining, (ii)~replacing RoPE with a symmetric or permutation-invariant encoding, or (iii)~the permutation-LM falsification test in App.~\ref{app:hypothesis_family}.

\subsection{Residual hypothesis family and falsification design}
\label{app:hypothesis_family}

Section~\ref{sec:mechanism-ruleout}'s rule-outs narrow but do not uniquely identify the bidirectional-over-causal mechanism. Three residual hypotheses remain:

\textbf{(H1) Joint unordered modeling capacity.} The scorer needs to be able to condition on an arbitrary observed-token subset $O \subseteq \{1,\dots,L\}$ without committing to a single sequential factorization of the unobserved positions. Bidirectional attention provides this by construction: each position's hidden state attends to every observed token regardless of ordering. Causal attention provides this only at the end-of-sequence token, and only if the sequence has been fully denoised; for partial states with scattered masking, causal attention commits to a single L$\to$R factorization that may not be optimal. Proposition~\ref{prop:causal_bidir_identifiability} formalizes this gap for prefix-causal readouts.

\textbf{(H2) Readout-pattern dependence.} The scorer needs a specific readout that can aggregate information after observing all tokens, such as last-token, appended-\texttt{[CLS]}, or learned query pooling. The last-token pooling retrain in Section~\ref{sec:mechanism-causal} closes about $70\%$ of the final-state ROC-AUC gap. A full ablation with an appended-\texttt{[CLS]} readout is a natural next step.

\textbf{(H3) Training-distribution shift on causal backbones.} Supervising a causal backbone on scattered-mask intermediate states may disrupt its left-to-right prefix inductive bias, producing attention patterns that cannot cleanly recover standard AR scoring behavior even on fully decoded (mask${=}0$) inputs.

Among these, (H1) is the most actionable. A \emph{permutation-language-model scorer}~\citep{yang2019xlnet} would have joint unordered modeling capacity by training (arbitrary orderings of unobserved positions are randomly sampled during pretraining) while remaining strictly autoregressive at inference time. If such a scorer closes the bidirectional-over-causal gap on dLLM intermediate states, (H1) is supported. If it does not, the bottleneck lies in (H2), (H3), or an explanation not listed here. We propose this as the next experimental step (Section~\ref{sec:discussion}).

\subsection{Design constraints for future PRM guidance}
\label{app:discussion_full}

Four design principles follow: (1)~\emph{architecture matters}, since causal scorers trail bidirectional ones on masked states even with a matched readout; (2)~\emph{compute matching matters}, because scorer calls change the effective inference budget; (3)~\emph{diversity matters}, since sparse branching recovers accuracy and wider retention keeps more correct lineages alive; and (4)~\emph{a final-state verifier is the natural terminal selector}, since SMC restores most of the pool without raising PRM-selected accuracy while a final-state PRM matches the ORM on the same candidates.

\section{Additional Formalism and Case Study}
\label{app:formalism_case}

\subsection{Theoretical bounds and proofs}
\label{app:theory}

\label{sec:theoretical-bounds}

The bounds below serve as diagnostics: they explain why AUC decay and weak reranking arise under natural null models, without assuming that the learned PRM is Bayes-optimal in any mask bucket.

\begin{proposition}[Information ceiling for masked-state AUC]
\label{prop:mi_auc}
Fix a mask bucket $t$, let $P_t$ be the on-policy law of $(\bx_t,y)$, with $y\in\{0,1\}$, class priors $\pi_y>0$, and $P_y=P_t(\bx_t\mid y)$. For any scorer $r:\mathcal X\to\mathbb R$,
$\operatorname{AUC}_t(r)\le \tfrac12+\mathrm{TV}(P_1,P_0) \le \tfrac12+\sqrt{I(\bx_t;y)/(2\pi_0\pi_1)}$
(capped at $1$). If $I(\bx_t;y)\to 0$ as $\rho(t)\to1$, then $\sup_r \operatorname{AUC}_t(r)\to\tfrac12$.
\end{proposition}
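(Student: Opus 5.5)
The plan is to prove the two inequalities separately: the first by writing $\operatorname{AUC}_t(r)-\tfrac12$ as a difference of expectations of a bounded test function, the second by decomposing $I(\bx_t;y)$ as a weighted sum of KL divergences to the mixture and applying Pinsker. The limit statement then follows immediately.

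\textbf{First inequality.} Let $X_1\sim P_1$ and $X_0\sim P_0$ be independent. With ties counted as one half, $\operatorname{AUC}_t(r)=\Pr[r(X_1)>r(X_0)]+\tfrac12\Pr[r(X_1)=r(X_0)]$. Define the mid-distribution function of the negatives,
\[
\varphi(s)=P_0\bigl(r(\bx)<s\bigr)+\tfrac12 P_0\bigl(r(\bx)=s\bigr),
\]
which takes values in $[0,1]$. Conditioning on $X_1$ gives $\operatorname{AUC}_t(r)=\mathbb E_{P_1}[\varphi(r(\bx))]$. Applying the same formula with $P_1$ replaced by $P_0$ gives $\mathbb E_{P_0}[\varphi(r(\bx))]=\tfrac12$. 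This holds because, for two i.i.d.\ draws from $P_0$, the events ``first score larger'' and ``second score larger'' have equal probability, and ties contribute exactly one half. Hence
\[
\operatorname{AUC}_t(r)-\tfrac12=\mathbb E_{P_1}[\varphi\circ r]-\mathbb E_{P_0}[\varphi\circ r]\le \mathrm{TV}(P_1,P_0),
\]
since $\varphi\circ r$ is a $[0,1]$-valued test function. The cap at $1$ is trivial.

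\textbf{Second inequality.} Let $M=\pi_0P_0+\pi_1P_1$ be the marginal law of $\bx_t$. Mutual information in nats decomposes as
\[
I(\bx_t;y)=\pi_0\,\mathrm{KL}(P_0\|M)+\pi_1\,\mathrm{KL}(P_1\|M).
\]
Pinsker gives $\mathrm{KL}(P_y\|M)\ge 2\,\mathrm{TV}(P_y,M)^2$. Because $P_1-M=\pi_0(P_1-P_0)$, we get $\mathrm{TV}(P_1,M)=\pi_0\,\mathrm{TV}(P_1,P_0)$, and symmetrically $\mathrm{TV}(P_0,M)=\pi_1\,\mathrm{TV}(P_1,P_0)$. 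Substituting,
\[
I\ge 2(\pi_0\pi_1^2+\pi_1\pi_0^2)\,\mathrm{TV}^2=2\pi_0\pi_1\,\mathrm{TV}^2,
\]
which rearranges to $\mathrm{TV}(P_1,P_0)\le\sqrt{I/(2\pi_0\pi_1)}$.

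\textbf{Limit statement.} The bound is uniform over all scorers $r$. If the priors stay bounded away from $0$ along the bucket sequence, then $I\to0$ forces $\sup_r\operatorname{AUC}_t(r)\to\tfrac12$. I would state this prior condition explicitly, since the informal statement leaves it implicit. The lower side is automatic: replacing $r$ by $-r$ shows $\sup_r\operatorname{AUC}_t(r)\ge\tfrac12$.

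\textbf{Main obstacle.} The only delicate step is the tie convention. The identity $\mathbb E_{P_0}[\varphi]=\tfrac12$ holds exactly only with the mid-distribution function, so that is the definition to use. If one instead used the strict CDF, the centering would be off by the tie mass. I would also state that $I$ is measured in nats, which is what Pinsker's constant $2$ requires.
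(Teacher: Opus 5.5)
Your proposal is correct and follows essentially the same route as the paper's sketch. The mid-distribution function of the negative scores gives a $[0,1]$-valued test function whose expectation gap is $\operatorname{AUC}_t(r)-\tfrac12$; you apply it directly on $\mathcal X$ instead of passing through the score laws and data processing, which is equivalent. Pinsker then converts total variation to mutual information, and your classwise decomposition $I(\bx_t;y)=\pi_0\,\mathrm{KL}(P_0\|M)+\pi_1\,\mathrm{KL}(P_1\|M)$ correctly supplies the constant $2\pi_0\pi_1$ that the paper states without detail. Your explicit check that $\mathbb E_{P_0}[\varphi\circ r]=\tfrac12$, and your remark that the limit needs $\pi_0\pi_1$ bounded away from zero, are both accurate refinements that the sketch leaves implicit.
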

\begin{proof}[Sketch]
Let $Q_y$ be the law of the score $r(\bx_t)$ given $y$, and $F_0^{\mathrm{mid}}(s)=\Pr_{Q_0}(S<s)+\tfrac12\Pr_{Q_0}(S=s)$, a function with values in $[0,1]$. Then
$\operatorname{AUC}_t(r)-\tfrac12=\mathbb E_{Q_1}F_0^{\mathrm{mid}}-\mathbb E_{Q_0}F_0^{\mathrm{mid}}\le\mathrm{TV}(Q_1,Q_0)\le\mathrm{TV}(P_1,P_0)$, the last step by data processing; Pinsker gives $I(\bx_t;y)\ge 2\pi_0\pi_1\mathrm{TV}(P_1,P_0)^2$.
\end{proof}

This explains why AUC decay with mask ratio is expected under partial-state label ambiguity; it does not imply the empirical PRM attains the ceiling.

\begin{proposition}[Pooled AUC need not imply within-problem reranking]
\label{prop:pooled_auc_rerank}
Let $A(r)$ be pooled AUC and $T_N(r)$ the expected within-problem top-1 selection accuracy with $N$ candidates. For every $N\ge 2$ and $\epsilon\in(0,1)$, there exists a distribution and scorer with $A(r)=1-\epsilon$ whose $T_N(r)$ equals that of uniform random selection.
\end{proposition}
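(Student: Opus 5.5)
The proof will be a construction, built on the ``difficulty-tracking'' scorer described after Proposition~\ref{prop:pooled_main}. The scorer is constant within each problem, so every top-$1$ choice reduces to a tie. Under uniform tie-breaking its within-problem accuracy $T_N(r)$ then equals that of uniform random selection. I will take $T_N(r)$ to mean the probability that the selected candidate is correct, with ties among maximizers broken uniformly; this is the convention under which a constant score and Random Rerank coincide. It is still a constructive choice, so I will state it explicitly.

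\textbf{Problem types.} Problems come in two types.
\begin{itemize}
\item An \emph{easy} problem, with probability $\alpha$, has all $N$ candidates correct.
\item A \emph{hard} problem, with probability $1-\alpha$, has all $N$ candidates incorrect.
\end{itemize}
The scorer assigns $s_{\mathrm{easy}}=1$ to every candidate of an easy problem and $s_{\mathrm{hard}}=0$ to every candidate of a hard one. Pooled AUC compares a random correct candidate against a random incorrect one. Every correct candidate comes from an easy problem and every incorrect one from a hard problem, so this pure construction has $A(r)=1$. On each problem the scorer is constant, so $T_N(r)$ equals the expected fraction of correct candidates per problem, which is exactly the uniform-random value.

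\textbf{Tuning the AUC to $1-\epsilon$.} To lower the AUC to exactly $1-\epsilon$ for arbitrary $\epsilon\in(0,1)$ while keeping scores constant within problems, I add a third, \emph{mixed} class of problems. For instance, a mixed problem has one correct and $N-1$ incorrect candidates, and all its candidates share a common score $s_m$.

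I will compute pooled AUC as a function of the class weights and of where $s_m$ sits relative to $0$ and $1$, counting ties as $\tfrac12$. With three score levels the AUC is an explicit rational function of the class probabilities. A cleaner alternative is to keep only easy and hard types but randomize the per-problem score: draw a problem-level score from $Q_{\mathrm{easy}}$ or $Q_{\mathrm{hard}}$ with $\Pr(S_{\mathrm{easy}}>S_{\mathrm{hard}})+\tfrac12\Pr(=)=1-\epsilon$. For example, let $S_{\mathrm{hard}}\sim\mathrm{Unif}[0,1]$ and $S_{\mathrm{easy}}\sim\mathrm{Unif}[c,1+c]$, choosing the shift $c$ to solve $1-(1-c)^2/2=1-\epsilon$. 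This needs $\epsilon\le\tfrac12$. For larger $\epsilon$ I will instead use a reversed shift, or a mixture with point masses. Continuity and the intermediate value theorem in a single parameter, sweeping the AUC from $0$ to $1$, handle all $\epsilon\in(0,1)$ at once.

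Because every candidate in a problem shares the problem's score, within-problem selection is still a uniform tie-break. Hence $T_N(r)$ equals the random-selection value irrespective of the score distribution.

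\textbf{Main obstacle.} The main difficulty is bookkeeping rather than depth. I must define pooled AUC precisely: sampling a random correct and a random incorrect candidate from the pooled candidate population, with ties counted as $\tfrac12$. I must also make sure that exactly $1-\epsilon$ is attainable for every $\epsilon\in(0,1)$, not just on a subinterval. The one-parameter family with the intermediate value theorem resolves this. I will also note that the example is non-degenerate, in that random selection has accuracy strictly between $0$ and $1$ whenever $\alpha\in(0,1)$.
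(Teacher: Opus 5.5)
Your argument proves the statement as literally written, and the AUC bookkeeping is right. With $S_{\mathrm{hard}}\sim\mathrm{Unif}[0,1]$ and $S_{\mathrm{easy}}\sim\mathrm{Unif}[c,1+c]$, the AUC is $1-(1-c)^2/2$ for $c\in[0,1]$ and $(1+c)^2/2$ for $c\in[-1,0]$. This sweeps $[0,1]$ continuously, so every $1-\epsilon$ is hit.

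The construction you settle on, however, is degenerate in a way that empties the conclusion. Every problem is either all-correct or all-incorrect, so \emph{every} selection rule attains exactly $\alpha$, including a perfect oracle. ``$T_N(r)$ equals random selection'' is then a property of the distribution, not a failure of the scorer. The example therefore does not show what the proposition is for: good pooled discrimination coexisting with useless within-problem ranking when ranking is actually possible. The non-degeneracy you check at the end (random accuracy strictly between $0$ and $1$) is not the relevant one. What matters is that some problems contain both correct and incorrect candidates. Your abandoned ``mixed'' class is exactly what would fix this: on a problem with one correct and $N-1$ incorrect candidates the oracle gets $1$, while your constant score gets $1/N$.

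The paper gets both properties at once by putting the noise in the labels rather than in the scores. A latent type $z\in\{0,1\}$, uniform over problems, makes each candidate correct with probability $1-\epsilon$ if $z=1$ and $\epsilon$ if $z=0$, and the scorer is $r(x)=z$. A random correct candidate then has $z=1$ with probability $1-\epsilon$, and a random incorrect one with probability $\epsilon$. Counting ties as $\tfrac12$ gives $A=(1-\epsilon)^2+\epsilon(1-\epsilon)=1-\epsilon$ exactly for every $\epsilon\in(0,1)$, with no continuity argument or case split. Meanwhile $T_N(r)=\tfrac12$ matches random selection, and in that model the oracle achieves $1-\tfrac12\bigl[\epsilon^N+(1-\epsilon)^N\bigr]>\tfrac12$. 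Your pure easy/hard example is the $\epsilon\to0$ limit of this family (with $\alpha=\tfrac12$). The label noise is what simultaneously lowers the AUC to $1-\epsilon$ and creates within-problem variation for the scorer to miss.
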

\begin{proof}[Sketch]
Let each problem have a latent type $z\in\{0,1\}$, uniform across problems, whose candidates are correct with probability $1-\epsilon$ if $z{=}1$ and $\epsilon$ if $z{=}0$, and set $r(x)=z$. Pooled over problems, $A=1-\epsilon$. Within a problem all candidates share $z$ and hence the same score, so uniform tie-breaking makes top-1 selection equivalent to random selection.
\end{proof}

In a problem with one correct and $N{-}1$ incorrect candidates, uniform random selection gives $T_N=1/N$, and a pairwise success rate $q$ under conditional independence gives $T_N=q^{N-1}$: $q=0.9$ yields $T_{32}\approx 0.038$. High pooled discrimination and poor within-problem ranking can coexist.

\begin{corollary}[Kendall-$\tau$ ceiling on top-1 rerank]
\label{cor:kendall_top1}
Consider problems with one correct and $N{-}1$ incorrect candidates. For a problem of type $z$, let $X^+$ and $X^-$ be its correct and an incorrect candidate, $q_z=\Pr(r(X^+)>r(X^-)\mid z)$, and $\tau_z=2q_z-1$. If the $N{-}1$ comparisons against incorrect candidates are conditionally independent and $\tau_z\le\tau_0$ for all problems, then $T_N(r)\le((1+\tau_0)/2)^{N-1}$.
\end{corollary}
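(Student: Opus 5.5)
We condition on the problem type $z$ and work with one fixed problem. Top-$1$ selection returns the correct candidate exactly when its score exceeds all $N{-}1$ incorrect scores; we adopt the convention that a tie counts against the correct candidate, or equivalently use $\ge$ in place of $>$ in the definition of $q_z$. Write $X^+$ for the correct candidate and $X^-_1,\dots,X^-_{N-1}$ for the incorrect ones, and let
\[
E_j=\{r(X^+)>r(X^-_j)\}, \qquad j=1,\dots,N-1 .
\]
The conditional top-$1$ success probability given $z$ is then $\Pr\bigl(\bigcap_j E_j \mid z\bigr)$.

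By the conditional-independence hypothesis, and since the incorrect candidates are exchangeable so that $\Pr(E_j\mid z)=q_z$ for every $j$, this intersection probability factorizes:
\[
\Pr\Bigl(\bigcap_{j} E_j \Bigm| z\Bigr)=\prod_{j=1}^{N-1}\Pr(E_j\mid z)=q_z^{\,N-1}.
\]
This is the same computation behind the remark $T_N=q^{N-1}$ following Proposition~\ref{prop:pooled_auc_rerank}.

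Next we convert from Kendall-$\tau$ to the pairwise rate. Since $\tau_z=2q_z-1$, we have $q_z=(1+\tau_z)/2$. The hypothesis $\tau_z\le\tau_0$ gives $q_z\le(1+\tau_0)/2$. The map $x\mapsto x^{N-1}$ is monotone on $[0,1]$, and $q_z\in[0,1]$, so
\[
q_z^{\,N-1}\le\bigl((1+\tau_0)/2\bigr)^{N-1}.
\]
If $\tau_0>1$, we cap the bound at $1$, where it is trivial.

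Finally we average over problem types. $T_N(r)=\mathbb E_z\bigl[q_z^{\,N-1}\bigr]$, and a uniform bound on the integrand passes through the expectation, giving $T_N(r)\le((1+\tau_0)/2)^{N-1}$.

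The only delicate step is the factorization. The events $E_j$ share the common variable $r(X^+)$, so they are generally positively correlated, and the product form holds only because conditional independence is assumed in the statement. We will therefore state explicitly that the hypothesis is read as independence of the events $E_j$ given $z$, not merely as independence of the candidates. Read the latter way, positive correlation would only increase the intersection probability and the bound could fail.
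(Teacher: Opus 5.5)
Your proof is correct and follows the paper's own reasoning: the paper gives no separate proof and relies on the remark before the corollary that conditional independence gives $T_N=q^{N-1}$, which you then combine with $q_z=(1+\tau_z)/2\le(1+\tau_0)/2$ and average over $z$. One correction: your two tie conventions are alternative fixes, not equivalent ones, and one of them is genuinely needed. Under the uniform tie-breaking used in Proposition~\ref{prop:pooled_auc_rerank}, a scorer that is constant within each problem has $q_z=0$, so the bound gives $0$, yet $T_N=1/N>0$.
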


\emph{Remark.} With a uniform $\tau_0{=}0.40$, close to the observed mean $\tau$ of $0.397$ (App.~\ref{app:sanity}), the corollary gives $T_8\le 0.7^7\approx 0.082$, below the $1/8{=}0.125$ of random choice: moderate pairwise signal need not survive a top-$1$ pick. A single learned scorer makes correlated comparison errors, which this toy model ignores, so the corollary illustrates the mechanism rather than bounding PRM Rerank accuracy. Empirically, PRM Rerank@$8$ reaches $42.84\%$ (\Cref{sec:empirical}), at the level of Random Rerank (App.~\ref{app:sanity}).

\begin{proposition}[Prefix-causal identifiability gap]
\label{prop:causal_bidir_identifiability}
Let $s=(M,x_M)$ be a partial state with observed answer-position subset $M\subseteq[L]$. A prefix-causal scorer at frontier $k$, which reads only positions $1,\dots,k$, is measurable only w.r.t.\ $\mathcal G_k=\sigma(M\cap[k],x_{M\cap[k]})$. If the Bayes posterior $\eta(s)=\Pr(y=1\mid s)$ is not $\mathcal G_k$-measurable, no prefix-causal scorer can match the bidirectional Bayes scorer under any strictly proper loss, however much data it sees. Diffusion masks range over $2^L$ subsets (or $\binom{L}{m}$ in a fixed-size bucket); strict prefixes span only $L+1$ mask shapes.
\end{proposition}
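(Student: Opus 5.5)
The argument runs through the Bayes-risk characterization of strictly proper losses, followed by a conditional-expectation argument. Fix a strictly proper loss $\ell(p,y)$ for binary $y$. Every prefix-causal scorer at frontier $k$ has the form $f(s)=g(\pi_k(s))$, where $\pi_k(s)=(M\cap[k],x_{M\cap[k]})$; equivalently, $f$ is $\mathcal G_k$-measurable.

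First, we identify the best achievable prefix-causal scorer. By the tower property,
\[
\mathbb E[\ell(f(s),y)]=\mathbb E\bigl[\,\mathbb E[\ell(f(s),y)\mid \mathcal G_k]\,\bigr],
\]
and inside the conditional expectation $f(s)$ is a constant. Since $y$ is binary, the inner term equals
\[
\eta_k\,\ell(f,1)+(1-\eta_k)\,\ell(f,0),\qquad \eta_k:=\Pr(y=1\mid\mathcal G_k)=\mathbb E[\eta(s)\mid\mathcal G_k].
\]
Strict propriety implies that this expression is uniquely minimized at $f=\eta_k$. Hence the unique (a.s.) risk minimizer among $\mathcal G_k$-measurable scorers is $\eta_k$.

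Second, we compare $\eta_k$ with the bidirectional Bayes scorer $\eta(s)$. Since $\eta$ is the unconditional Bayes rule, conditioning on the full state gives
\[
\mathbb E[\ell(\eta_k,y)]-\mathbb E[\ell(\eta,y)]=\mathbb E\bigl[d_\ell(\eta(s),\eta_k(s))\bigr],
\]
where $d_\ell(p,q)=\bigl[p\,\ell(q,1)+(1-p)\,\ell(q,0)\bigr]-\bigl[p\,\ell(p,1)+(1-p)\,\ell(p,0)\bigr]\ge 0$ is the divergence induced by $\ell$. Strict propriety makes $d_\ell(p,q)>0$ whenever $p\neq q$. If $\eta$ is not $\mathcal G_k$-measurable, then $\eta\neq\eta_k$ on a set of positive probability, since otherwise $\eta=\eta_k$ a.s. would be $\mathcal G_k$-measurable up to null sets. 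The expected divergence is therefore strictly positive.

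This gap is a property of the population risk, so it persists "however much data it sees": any consistent learner over prefix-causal scorers converges at best to $\eta_k$. The counting remark is immediate. Arbitrary observation sets $M\subseteq[L]$ number $2^L$, or $\binom{L}{m}$ at fixed size, while observation patterns of the form $[k]$ number $L+1$.

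The main obstacle is measure-theoretic care in the second step. Non-measurability must be read modulo null sets, i.e.\ as "$\eta$ does not a.s.\ equal any $\mathcal G_k$-measurable function", and $d_\ell>0$ off the diagonal must hold with enough integrability for the expectation to be well defined. I would state the hypothesis in the a.s. form and assume $\ell$ is bounded on the relevant range, or take the expectation in the extended sense, noting that nonnegativity of $d_\ell$ makes the expectation meaningful either way.
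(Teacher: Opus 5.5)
Your proposal is correct and follows the paper's route: the sketch asserts that strict propriety forces positive excess risk whenever $\eta\neq\mathbb E[y\mid\mathcal G_k]$, and you fill in the details by showing that $\eta_k=\mathbb E[\eta\mid\mathcal G_k]$ is the best $\mathcal G_k$-measurable scorer and writing the gap as $\mathbb E[d_\ell(\eta,\eta_k)]>0$. You are also right to read the hypothesis modulo null sets: if $\eta$ were a.s.\ equal to some $\mathcal G_k$-measurable function, the excess risk would be zero, and the paper's sketch leaves this point implicit.
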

\begin{proof}[Sketch]
Strict propriety forces positive excess risk whenever $\eta\ne\mathbb E[y\mid\mathcal G_k]$. Two states agreeing on the prefix but differing in right-context observed tokens are collapsed by $\mathcal G_k$ yet separable by the full bidirectional filtration.
\end{proof}

\emph{Scope}: the statement applies to prefix-causal PRM readouts; a causal transformer with a sequence-level readout after observing the whole masked sequence is not covered. It formalizes the combinatorial mismatch between autoregressive prefix information and non-prefix diffusion masks.


\subsection{Case study: why PRM-guided selection fails (GSM8K problem 526)}
\label{sec:case_study}

To make the failure concrete, we trace one GSM8K test problem (id $526$) through the diagnostics.

\paragraph{Problem.}
\emph{``Ada's daily electric consumption is $12$ kWh. She adds a device consuming $2$ kWh/day. At $\$1.50$/kWh, what is the weekly bill difference?''}
Gold answer: $\$21$ (the new device adds $2{\times}7{\times}\$1.50{=}\$21$/week).

\paragraph{Three candidate trajectories and their PRM scores.}
The $32$ independent candidates for this problem include:

\begin{center}
\small
\begin{tabular}{clcc}
\toprule

\rowcolor{ArxivTableHead}
\tablehead{Candidate} & \tablehead{Answer sketch} & \tablehead{Causal PRM score} & \tablehead{Correct?} \\
\midrule

\mdseries
$t{=}13$ & ``$2 \times \$1.50 = \$30$'' (wrong arithmetic, no weekly factor) & $+1.001$ & $\times$ \\
$t{=}25$ & ``diff $= 12{-}2{=}10$ kWh, $10{\times}\$1.50{=}\$15$'' (reversed sign) & $-0.827$ & $\times$ \\
$t{=}3$  & ``$2 \times \$1.50 \times 7 = \$21$'' (correct chain) & $-0.908$ & $\checkmark$ \\
\bottomrule
\end{tabular}
\end{center}

\paragraph{What the three findings predict about this case.}
(i)~\emph{Mask-ratio decay} (\Cref{sec:mechanism-decay}): at early denoising the PRM sees nearly empty states, where its discrimination is near chance (ROC-AUC $0.54$ in the most-masked bucket), so it has little basis to separate the chain that becomes $t{=}13$ from the one that becomes $t{=}3$.
(ii)~\emph{Diversity collapse} (\Cref{sec:mechanism-diversity}): a PRM Guided run with $\BranchEvery{=}16$ makes its first prune near mask ratio $0.9$, where its ROC-AUC is lowest (Figure~\ref{fig:mechanism}(a)), putting the chain that eventually becomes $t{=}3$ at risk; the offline counterfactual in App.~\ref{app:additional_matched_controls} finds that a top-$1$ cut at the initial stored state removes every correct lineage in $46\%$ of cases.
(iii)~\emph{Causal PRM miscalibration} (\Cref{sec:mechanism-causal}): at mask${=}0$ the causal PRM assigns its highest score ($+1.001$) to $t{=}13$, a short trajectory that reaches a wrong answer. Top-$1$ selection within this triple picks the wrong answer, a concrete case of pooled discrimination failing to carry over to within-problem ranking (Proposition~\ref{prop:pooled_auc_rerank}).

\paragraph{What ORM Rerank does differently on the same candidates.}
The bidirectional ORM scores $t{=}3$ highest among the $N{=}32$ Vanilla candidates for this problem and selects it. The causal failure is common: on $639$ of the $1{,}319$ test problems ($48\%$), the causal PRM confidently prefers a wrong trajectory while a correct one is available. Across all problems, ORM Rerank$@8$ leads the mean-pooled causal reranker by $31.2$\,pp, $75.13\%$ against $43.90\%$ (App.~\ref{app:lasttoken_pareto}).

\section{Broader Impact, Ethics, and Compute Resources}
\label{app:broader_impact_compute}

This paper is diagnostic: it evaluates reward-guided dLLM reasoning protocols and recommends matched-compute baselines and readout checks. The positive impact is more reproducible reward-model evaluation and less wasted test-time compute; negative risk is limited to ordinary math and code reasoning evaluation of LLMs, with no new deployment capability or safety-sensitive dataset.

\paragraph{Ethics.}
All experiments use public benchmarks (GSM8K, MATH, MBPP) and public model backbones (Dream-7B, LLaDA-8B-Base). No human subjects, personal data, private data, or crowdsourcing are involved. The released artifacts are evaluation resources and LoRA adapters rather than base-model weights.

\paragraph{Compute resources.}
The main experimental campaign used at most $11\times$ NVIDIA H20-96GB GPUs over roughly one month. A PRM training job takes about $7$ GPU-hours; recorded job durations put a $32$-sample MATH500 best-of-$N$ sweep at about $48$ GPU-hours per run and the GSM8K $32$-sample sweep at about $100$ GPU-hours under our sharding and batching. PRM and ORM training, snapshot scoring, and the GSM8K and MATH500 sweeps over runs, $K$ values, and $\BranchEvery$ values, including preliminary experiments, used about $2{,}400$ H20-96GB GPU-hours: about $960$ for the final $K\in\{6,12,24\}$ campaign and about $1{,}400$ for earlier training and evaluation. Per-method wall-clock validation for the headline comparisons is reported in App.~\ref{app:compute}.

\paragraph{Reproducibility.}
The released artifacts include the scored snapshot corpus, trained PRM adapters/checkpoints, evaluation toolkit with Pareto analysis, figure scripts, and training/evaluation recipes. The public code repository is \url{https://github.com/dLLM-PRM-Gap/}; weights and data are collected at \url{https://huggingface.co/collections/YanZhanPKU/dllm-prm-gap}.


\end{document}